\tikzstyle{block}=[draw opacity=0.7,line width=1.4cm]
\definecolor{CranJ}{cmyk}{0,0.69,0.54,0.04} 
\definecolor{PinkJ}{cmyk}{0,0.71,0.43,0.12} 
\definecolor{Cran}{cmyk}{0,0.73,0.41,0.29} 
\definecolor{VRed}{cmyk}{0,0.75,0.25,0.2} 
\definecolor{ORed}{cmyk}{0,0.75,0.75,0} 
\definecolor{CBlue}{cmyk}{1,0.25,0,0} 
\title{
\blue{Server assisted distributed cooperative localization over unreliable communication links}}
\author{Solmaz S. Kia   \quad Jonathan Hechtbauer \quad David Gogokhiya\quad Sonia  Mart{\'\i}nez
  \thanks{The first (the corresponding author) and the third authors
    are, respectively, with the Department of Mechanical and Aerospace Engineering and Department of Computer Science,  
    University of California Irvine, Irvine, CA 92697, USA, {\tt\small
      solmaz,dgogokhi@uci.edu}, the second author is with the Department of Mechatronics, Management Center Innsbruck, Innsbruck, 6020, Austria, and the forth author is
    with the Department of Mechanical and Aerospace Engineering,
    University of California, San Diego, La Jolla, CA 92093, USA,
    {\tt\small soniamd@ucsd.edu} .}
}
\newcommand{\lmssg}{\textsl{Landmark-message}\xspace}
\newcommand{\splitEKF}{\textsl{Split-EKF}\xspace}
\newcommand{\PDsplitEKF}{\textsl{SA-split-EKF}\xspace}
\newcommand{\VV}{\mathcal{V}}
\newcommand{\real}{{\mathbb{R}}}
\newcommand{\reals}{{\mathbb{R}}}
\newcommand{\nonnegativeinteger}{{\mathbb{Z}}_{\ge 0}}
\newcommand{\prpg}{\mbox{-}}
\newcommand{\updt}{\mbox{+}}
\newcommand{\tr}[1]{\text{Trace}(#1)}
\newcommand{\until}[1]{\in\{1,\cdots,#1\}}
\newcommand{\vect}[1]{\boldsymbol{\mathbf{#1}}}
\newcommand{\Bvect}[1]{\bar{\boldsymbol{\mathbf{#1}}}}
\newcommand{\Tvect}[1]{\tilde{\boldsymbol{\mathbf{#1}}}}
\newcommand{\Hvect}[1]{\hat{\boldsymbol{\mathbf{#1}}}}
\newcommand{\Diag}[1]{\operatorname{Diag}(#1)}
\newcommand{\oprocendsymbol}{\hbox{$\bullet$}}
\newcommand{\oprocend}{\relax\ifmmode\else\unskip\hfill\fi\oprocendsymbol}
  \newcommand{\blue}[1]{{\color{black} #1}}
\newtheorem{thm}{Theorem}[section]
\definecolor{mycolor}{rgb}{0.122, 0.435, 0.698}
\newcommand{\mybox}[1]{%
  \setbox0=\hbox{#1}%
  \setlength{\@tempdima}{\dimexpr\wd0+13pt}%
  \begin{tcolorbox}[colframe=mycolor,boxrule=0.5pt,arc=4pt,
      left=6pt,right=6pt,top=6pt,bottom=6pt,boxsep=0pt,width=\@tempdima]
    #1
  \end{tcolorbox}
}
\begin{document}
\maketitle

\begin{abstract}
  \blue{This paper considers the problem of cooperative localization (CL)
  using inter-robot measurements for a group of networked robots with
  limited on-board resources. We propose a novel recursive algorithm
  in which each robot localizes itself in a global coordinate frame by
  local dead reckoning, and opportunistically corrects its pose
  estimate whenever it receives a relative measurement update message
  from a server.  The
  computation and storage cost per robot in terms of the size of the
  team is of order {$O(1)$}, and the robots are only required to
  transmit information when they are involved in a relative
  measurement. The server also only needs to compute and transmit update
  messages when it receives an inter-robot measurement.  We show that
  under perfect communication, our algorithm is an alternative but
  exact implementation of a joint CL for the entire team via
  Extended Kalman Filter (EKF). The perfect communication however is
  not a hard requirement. In fact, we show that our
  algorithm is intrinsically robust with respect to communication
  failures, with formal guarantees that the updated estimates of the
  robots receiving the update message are of minimum variance in
    a first-order approximate sense at that given timestep.  We
  demonstrate the performance of the algorithm in simulation and~experiments.}
  \end{abstract}
  \vspace{-0.06in} \emph{Keywords}: Cooperative localization; limited
  onboard resources; message dropouts. \vspace{-0.02in}
\section{Introduction}
\vspace{-0.08in} 
\blue{We consider the design of a decentralized cooperative
localization (CL) algorithm for a group of communicating mobile
robots. Using CL, mobile robots in a team improve their positioning
accuracy by jointly processing inter-robot relative measurement
feedbacks.
Unlike classical beacon-based localization algorithms~\cite{JL-HFD:91}
or fixed feature-based Simultaneous Localization and Mapping
algorithms~\cite{MWMGD-PN-SC-HFDW-MC:01}, CL does not rely on external
features of the environment.} As such, this approach is an appropriate
localization strategy in applications that take place in a priori
uncharted environments with no or intermittent GPS
access. 

\blue{Via CL strong correlations among the local state estimates of the
robotic team members are created.  Similar to any state estimation
process, accounting for these cross-correlations is crucial for the
consistency of CL algorithms.  Since correlations create nonlinear
couplings in the state estimate equations of the robots, to
produce consistent results, initial implementations of CL were fully
centralized. These schemes gathered and processed information \emph{at
  each time-step} from the entire team at a single device, either by
means of a leader robot or a fusion center (FC), and broadcast back
the estimated location results to each
robot~\cite{SIR:00,AH-MJM-GSS:02}. Multi-centralized CL, wherein each
robot keeps a copy of the state estimate equation of the entire team
and broadcasts its own information to the entire team so that every
robot can reproduce the centralized pose estimate is also proposed in
the literature~\cite{NT-SIR-GBG:09}. Besides a high-processing cost
for each robot, this scheme requires an all-to-all robot communication
at the time of each information exchange. Developing consistent CL
algorithms that account for the intrinsic cross-correlations of state
estimates with reasonable communication, computation and storage costs
has been an active research area for the past decade.} This problem
becomes more challenging if in-network communications fail due to
external events such as obstacle blocking or limited communication
ranges.

\blue{For applications that maintaining multi-agent connectivity is
challenging,~\cite{AB-MRW-JJL:09,POA-CR-RKM:01,LCC-EDN-JLG-SIR:13,HL-FN:13,DM-NO-VC:13,LL-TS-SIR-WB:16}
propose a set of algorithms in which communication is only required at
the relative measurement times between the two robots involved in the
measurement. As such, these schemes can update only the state estimate
of one or both of these robots. }To eliminate the tight connectivity
requirement, instead of maintaining the exact prior robot-to-robot
correlations, in~\cite{AB-MRW-JJL:09} each robot maintains a bank of
EKFs together with an accurate book-keeping of what robot estimates
were used in the past to update these local filters. Computational
complexity, large memory demand, and the growing size of information
needed at each update time are the main
drawbacks. \blue{In~\cite{POA-CR-RKM:01,LCC-EDN-JLG-SIR:13,HL-FN:13,DM-NO-VC:13},
also the prior robot-to-robot correlations are not maintained, but are
accounted for in an implicit manner using Covariance Intersection
fusion (CIF) method.  Because CIF uses conservative bounds to account
for missing cross-covariance information, these methods often deliver
highly conservative estimates. To improve estimation
accuracy,~\cite{LL-TS-SIR-WB:16} proposes an algorithm in which each
robot, by tolerating an $O(N)$ processing and storage cost, maintains
an approximate track of its prior cross-covariances with
others. 
In another approach to relax
connectivity,~\cite{SEW-JMW-LLW-RME:13} proposes a leader-assistive CL
scheme for underwater vehicles.} This algorithm is a decentralized
extended information filter that uses ranges and state information
from a single reference source (the server) with higher navigation
accuracy to improve localization  
accuracy  of underwater
vehicle(s) (the client(s)). In this scheme the server interacts
with each client separately and there is no cooperation between the
clients.

\blue{Despite their relaxed connectivity requirement, the algorithms
of~\cite{AB-MRW-JJL:09,POA-CR-RKM:01,LCC-EDN-JLG-SIR:13,HL-FN:13,DM-NO-VC:13,LL-TS-SIR-WB:16,SEW-JMW-LLW-RME:13}
are conservative also by nature because they do not enable other
agents in the network to fully benefit from measurement
updates. Recall that correlation terms are means of expanding the
benefit of robot-to-robot measurement updates to the entire team
(see~\cite{SSK-SF-SM:16} for further details). Therefore,
\emph{tightly-coupled} decentralized CL algorithms that maintain
the correlations among the team members result in better localization
accuracy.  One such algorithm obtained from distributing computations
of a joint EKF CL algorithm is proposed in~\cite{SIR-GAB:02}, where
the propagation stage is fully decentralized by splitting each
cross-covariance term between the corresponding two robots.  However,
at update times, the separated parts should be combined, requiring
either an all-to-all robot communications or bidirectional all-to-a
fusion-center communications. Another decentralized CL algorithm based on
decoupling the propagation stage of a joint EKF CL using an
alternative but equivalent formulation of EKF CL is proposed
in~\cite{SSK-SF-SM:16}.  Unlike~\cite{SIR-GAB:02},
in~\cite{SSK-SF-SM:16} each robot can locally reproduce the updated
pose estimate and covariance of the joint EKF at the update stage,
after receiving an update message only from the robot that has made
the relative measurement.  In both of these algorithms, for a team of $N$
robots, each robot incurs an $O(N^2)$ processing and storage cost as
they need to evolve a variable of size of the entire covariance matrix
of the robotic team.}
Subsequently,~\cite{EDN-SIR-AM:09} presents a maximum-a-posteriori
(MAP) decentralized CL algorithm in which all the robots in the team calculate
parts of the centralized CL. All the algorithms above assume that
communication messages are delivered perfectly at all times. A decentralized CL
approach equivalent to a centralized CL, when possible, which handles
both limited communication ranges and time-varying communication
graphs is proposed in~\cite{KYKL-TDB-HHTL:10}. This technique uses an
information transfer scheme wherein each robot broadcasts all its
locally available information (the past and present measurements, as
well as past measurements previously~received from other robots) to
every robot within its communication radius at each time-step. The
main drawback of this algorithm is its high communication and storage
cost.

\blue{ In this paper, we design a novel tightly-coupled distributed CL
  algorithm in which each robot localizes itself in a global
  coordinate frame by local dead reckoning, and
  opportunistically corrects its pose estimate whenever it
  receives a relative measurement update message from a server. The
  update message is broadcast any time server receives an inter-robot
  relative measurement and local estimates from a pair of robots in
  the team that were engaged in a relative measurement. In our setup,
  the server can be a team member with greater processing and storage
  capabilities. Under a perfect communication scenario, we show that
  our algorithm is an exact distributed implementation of a joint
  CL via EKF formulation. To obtain our algorithm, we use an
  alternative representation of EKF formulation of CL called \splitEKF
  for CL.  \splitEKF for CL was proposed in~\cite{SSK-SR-SM:15-icra}
  without the formal guarantee of equivalency. In this paper, we
  establish this guarantee via a mathematical induction proof. Our
  next contribution is to show that our proposed algorithm is robust
  to occasional message dropouts in the network. Specifically, we show
  that the updated estimates of robots receiving the update message
  are minimum variance.  In our algorithm, since every robot only
  propagates and updates its own pose estimates, the storage and
  processing cost per robot is {$O(1)$}. Robots only need to
  communicate with the server if they are involved in an inter-robot
  measurement. Since occasional message drop-outs are allowed in our
  algorithm, the connectivity requirement is flexible. Moreover, we
  make no assumptions about the type of robots or relative
  measurements. Therefore, our algorithm can be employed for teams of
  heterogenous robots.  }

 \vspace{-0.05in}
\section{Preliminaries}\label{sec::robot-discrib}\vspace{-0.08in}
\blue{In this section, we describe our robotic team model and review the joint CL via EKF as well as its alternative representation \splitEKF.  In the proceeding sections, we use \splitEKF to devised our proposed server assisted CL algorithm. 

We consider a team of $N$ robots in which every robot has a detectable
unique identifier and corresponding unique integer label belonging to
the set {$\VV=\{1,\dots,N\}$}. Using a set of
  proprioceptive sensors, robot {$i\in\VV$} measures its self-motion
  and uses it to dead reckon, i.e., propagate its equations of
  motion $\vect{x}^i(k+1)=\vect{f}^i(\vect{x}^i(k),\vect{u}_m^i(k))$,
  $k\in\nonnegativeinteger$, where {$\vect{x}^i\in\reals^{n^{i}}$} is
  the pose vector and
  {$\vect{u}_m^i=\vect{u}^i+\vect{\eta}^i\in\reals^{m^i}$} is the
  measured self-motion variable (for example velocities) with
  $\vect{u}^i$ being the actual value and $\vect{\eta}^i$ the
  contaminating noise. } The robotic team can be heterogeneous.
Every robot also carries exteroceptive
  sensors  to detect, uniquely, the other
  robots in the team and take relative measurements from them, e.g.,
  range or bearing or both. We let ($i\xrightarrow{k}j$) indicate that robot $i$ has taken relative measurement from robot $j$ at time $k$. The relative measurement is modeled by
  {\begin{align}\label{eq::measur_i,j}
      \vect{z}_{i,j}(k)&=\vect{h}_{i,j}(\vect{x}^i(k),\vect{x}^j(k))+\vect{\nu}^i(k),~~\vect{z}_{i,j}\in\real^{n_z^i},
\end{align}}
where {$\vect{h}_{i,j}(\vect{x}^i,\vect{x}^j)$} is the measurement model
and {$\vect{\nu}^i$} is measurement noise. The noises {$\vect{\eta}^i$} and {$\vect{\nu}^i$}, {$i\in\VV$}, are independent
  zero-mean white Gaussian processes with known positive definite
  variances {$\vect{Q}^i(k)=\text{E}[{\vect{\eta}^i}(k)
    \vect{\eta}^i(k)^\top]$} and $\vect{R}^i(k)\!=\!\text{E}[{\vect{\nu}^i}(k)
\vect{\nu}^i(k)^\top]$. 
All noises are assumed to be mutually
uncorrelated. In the~following, we use $\mathbb{S}^{n}_{>0}$ as set of real positive definite $n\times n$ matrices.

\blue{Joint CL via EKF is obtained from applying
EKF over the joint system motion model ~$
  \vect{x}(k\!+\!1)\! =\!(\vect{f}^1(\vect{x}^1, \vect{u}^1),\cdots, \vect{f}^N( \vect{x}^N ,$ $\vect{u}^N))+\Diag{\vect{g}^1(\vect{x}^1),\cdots,\vect{g}^N(\vect{x}^N)}\vect{\eta}(k),$ and  the relative measurement
model~\eqref{eq::measur_i,j}~\cite{SIR-GAB:02}. }
Starting at
$\Hvect{x}^{i\updt}\!(0)\!\in\!\real^{n^i}$, $\vect{P}^{i\updt}\!(0)\!\in\!\mathbb{S}^{n^i}_{>0}$, $\vect{P}_{i,j}^{\updt}(0)\!=\!\vect{0}_{n^i\times
  n^j}$, $i\in\VV$ and $j\!\in\!\VV\backslash\{i\}$, the propagation and update equations of the EKF  CL are 
\begin{subequations}\label{eq::central-robotwise}
 \begin{align}
&    \!\!\!\Hvect{x}^{i\prpg}(k\!+\!1)\!=\vect{f}^i(\Hvect{x}^{i\updt}(k),\vect{u}^i(k)),\label{eq::propag_central_Expanded-a}\\
 &  \!\!\!\vect{P}^{i\prpg}(k\!+\!1)\!=\vect{F}^i(k)\vect{P}^{i\updt}(k)\vect{F}^i(k)\!^\top\!\!\!+\!\vect{G}^i(k)\vect{Q}^i(k)\vect{G}^i(k)\!^\top\!\!\!,\label{eq::propag_central_Expanded-b}\\
  & \!\!\!\vect{P}_{i,j}^{\prpg}(k\!+\!1)\!=\vect{F}^i(k)\vect{P}_{i,j}^{\updt}(k){\vect{F}^j(k)}\!^\top\!\!,\label{eq::propag_central_Expanded-c}\\
     & \!\!\!\Hvect{x}^{i\updt}(k\!+\!1)\!=\Hvect{x}^{i\prpg}(k\!+\!1)+\vect{K}_i(k\!+\!1)\vect{r}^{a}(k\!+\!1),\label{eq::RobotCovarUpdate-a}\\
  & \!\!\!\vect{P}^{\!i\updt}(k\!+\!1)\!=\vect{P}^{\!i\prpg}\!(k\!+\!1)\!-\!\vect{K}_i(k\!+\!1) \vect{S}_{a,b}(k\!+\!1)\vect{K}_i(k\!+\!1)\!\!^\top\!\!\!,\label{eq::RobotCovarUpdate-b}\\
 &  \!\!\!\vect{P}_{\!i,j}^{\updt}(k\!+\!1)\!=\vect{P}_{\!i,j}^{\prpg}(k\!+\!1)\!-\!\vect{K}_{\!i}(k\!+\!1)\vect{S}_{a,b}(k\!+\!1)\vect{K}_{\!j}(k\!+\!1)\!^\top\!\!\!,\label{eq::RobotCovarUpdate-c}\\
   & \!\!\!\vect{K}_i(k\!+\!1)=\label{eq::K_gain_robotwise}\\
   &\begin{cases}\vect{0}, \qquad\qquad\qquad\qquad~\text{no relative measurement at~}k\!+\!1,\\
     (\vect{P}_{i
       ,b}^{\prpg}(k\!+\!1)\Tvect{H}_b^\top\!+\!\vect{P}_{i,a}^{\prpg}(k\!+\!1)\Tvect{H}_a^\top){\vect{S}_{a,b}}^{-1}\!,
    \qquad~ a\xrightarrow{k+1} b.
  \end{cases}\nonumber
\end{align}
\end{subequations} 
 for
$k\in\nonnegativeinteger$, with $\vect{F}^i=\partial\vect{f}(\Hvect{x}^{i\updt},\vect{u}_m^i)/\partial
  \vect{x}^i|_{\Hvect{x}^i,\vect{u}^i_m=\vect{0}}$ and
$\vect{G}^i=\partial\vect{f}(\Hvect{x}^{i\updt},\vect{u}_m^i)/\partial
  \vect{\eta}^i|_{\Hvect{x}^i,\vect{u}^i_m=\vect{0}}$.
Moreover, when a robot $a$ takes a relative measurement from robot $b$ at some given
 time {$k+1$}, the measurement  residual and its covariance are,
 respectively, 
   \vspace{-0.08in}
{\begin{subequations}
\begin{align}
 \!\!\!\vect{r}^{a}(k\!+\!1)\!&=\vect{z}_{a,b}(k\!+\!1)\!-\!\vect{h}_{a,b}(\Hvect{x}^{a\prpg}(k\!+\!1),\Hvect{x}^{b\prpg}(k\!+\!1)),\label{eq::reletive_Residual}\\
  \!\!\vect{S}_{a,b}(k\!+\!1)\!
  \!&=\!\vect{R}^a(k\!+\!1)+\Tvect{H}_a(k\!+\!1)\vect{P}^{a\prpg}(k\!+\!1)\Tvect{H}_a(k\!+\!1)\!^\top\nonumber\\
 &~~+\Tvect{H}_b(k+1) \vect{P}^{b\prpg}(k+1)\Tvect{H}_b(k+1)^\top\label{eq::S_ab}\\ 
&~~+\Tvect{H}_b(k+1)\vect{P}_{ba}^{\prpg}(k+1){\Tvect{H}_a}(k+1)^\top\nonumber\\
&~~+\Tvect{H}_a(k+1)\vect{P}_{a,b}^{\prpg}(k+1)\Tvect{H}_b(k+1)^\top,\nonumber
\end{align}
\end{subequations}} 
where (without loss of generality we assume that
$a<b$) \vspace{-0.08in}\begin{align}
  &\vect{H}_{a,b}(k)=\big[\overset{1}{\vect{0}}~~\overset{\cdots}{\cdots}~~\overset{a}{\Tvect{H}_a}(k)~~\overset{a+1}{\vect{0}}~~\overset{\cdots}{\cdots}~~\overset{b}{\Tvect{H}_b}(k)~~\overset{b+1}{\vect{0}}~~\overset{\cdots}{\cdots}~~\overset{N}{\vect{0}}\big],\nonumber\\
     &\Tvect{H}_l(k)=\partial\vect{h}_{a,b}(\Hvect{x}^{a\prpg}(k),\Hvect{x}^{b\prpg}(k))/\partial \vect{x}^l,\quad l\in\{a,b\}.
  \label{eq::H_ab} 
\end{align} 
\blue{$\vect{P}_{i,j}$~is the cross-covaraince between the estimates~of~robots $i$ and $j$. Equations in~\eqref{eq::central-robotwise} are the representation of the joint EKF CL in robot-wise components, e.g., ~{$\vect{K}\!=\![\vect{K}_1^\top,\cdots,\vect{K}_N^\top
]^\top\!\!=\vect{P}^{\prpg}(k\!+\!1)\vect{H}_{a,b}(k\!+\!1)^\top {\vect{S}_{a,b}}(k+1)^{-1}$} and 
\begin{align}\label{eq::Kgain_central}
  \vect{P}^{\updt}(k\!+\!1)&=\vect{P}^{\prpg}(k\!+\!1)\!-\!\vect{K}(k\!+\!1)\vect{S}_{a,b}\vect{K}(k\!+\!1)^\top
\end{align}  expands as ~\eqref{eq::RobotCovarUpdate-b} and~\eqref{eq::RobotCovarUpdate-c}.  }\\
Since $\vect{K}_{\!i}(k\!+\!1)
\vect{S}_{a,b}(k\!+\!1)\vect{K}_{\!i}(k\!+\!1)^\top$
in~\eqref{eq::RobotCovarUpdate-b} is positive~semi-definite, relative
measurement updates reduce the estimation uncertainty. \blue{However, due to
the inherent coupling in~cross-covariances
~\eqref{eq::propag_central_Expanded-c}
and~\eqref{eq::RobotCovarUpdate-c}, the EKF
CL~\eqref{eq::central-robotwise} can only be implemented in a
  decentralized way using all-to-all communication if each agent
    keeps a copy of its cross-covariance matrices with the rest of the
    team.  \splitEKF~CL, proposed
in~\cite{SSK-SF-SM:16}, is as an \textit{alternative but, as proven
  here, an exactly equivalent} representation of the EKF CL
formulation~\eqref{eq::central-robotwise}.  It uses
a set of intermediate variables to allow for the decoupling of the
estimation equations of the robots as shown in the next~section.}
\begin{thm}[\splitEKF~CL, an exact alternative representation of EKF for joint CL]\label{thm::main}
  Consider the EKF CL algorithm~\eqref{eq::central-robotwise} with
 its given initial conditions.
For $i\in\VV$, let
  $\vect{\Phi}^i(0)=\vect{I}_{n^i}$ and
  $\vect{\Pi}_{i,j}(0)=\vect{0}_{n^i\times n^j}$,
  $j\!\in\!\VV\backslash\{i\}$. Moreover, assume that
  $\vect{F}^i(k)$, {$i\in\VV$}, is invertible at all
  $k\in\nonnegativeinteger$. Next, for $i\in\VV$  let
  \begin{subequations}\label{eq::intermidate_var}
\begin{align}
  &\vect{\Phi}^i(k+1)=\vect{F}^i(k)\vect{\Phi}^i(k),\label{eq::Phi}\\
   &\vect{\Pi}_{i,j}(k\!+\!1)=\vect{\Pi}_{i,j}(k)+\vect{\Gamma}_i(k\!+\!1)\,\vect{\Gamma}_j(k\!+\!1)\!^\top,\label{eq::Pi}
  \end{align}
  \end{subequations}
  $j\!\in\!\VV\backslash\{i\}$, where
    \begin{subequations}\label{eq::intermidate_var2}
\begin{align}
  &\vect{\Gamma}_{i}(k\!+\!1)=\vect{0},~~~~~~~
  \text{no relative measurement at~} k\!+\!1,\label{eq::barD-no-meas}\\
  &\vect{\Gamma}_{a}(k\!+\!1)=
  \big(\vect{\Pi}_{a,b}(k){\vect{\Phi}^b(k+1)}^\top\Tvect{H}_{b}^\top\!
  +\nonumber\\
  &\qquad\!\vect{\Phi}^a(k+1)^{-1}\vect{P}^{a\prpg}(k+1)\Tvect{H}_{a}^\top\big)\,{\vect{S}_{a,b}}\!
  \!^{-\frac{1}{2}}\!,~\quad a\xrightarrow{k+1} b, \label{eq::barD-a}\\
  &\vect{\Gamma}_{b}(k+1)=
  \big(\vect{\Phi}^{b}(k+1)^{-1}\vect{P}^{b\prpg}(k+1)\Tvect{H}_{b}^\top\!
  +\nonumber\\
  &~~~~~~\qquad\!\vect{\Pi}_{b,a}(k){\vect{\Phi}^a(k+1)}^\top\Tvect{H}_{a}^\top\big)
  \,{\vect{S}_{a,b}}\!\!^{-\frac{1}{2}},\quad a\xrightarrow{k+1} b, \label{eq::barD-b}\\
  &\vect{\Gamma}_{l}(k+1)=
  (\vect{\Pi}_{l,b}(k){\vect{\Phi}^b(k+1)}^\top\Tvect{H}_{b}^\top\!
  +\vect{\Pi}_{l,a}(k) \times\nonumber\\
  &~~~~~~{\vect{\Phi}^a(k+1)}^\top\Tvect{H}_{a}^\top)\,{\vect{S}_{a,b}}\!\!^{-\frac{1}{2}},
  ~ l\!\in\!\VV\backslash\{a,\!b\}, ~~ a\xrightarrow{k+1} b,\label{eq::barD-i}
\end{align} 
\end{subequations}
for $k\in\nonnegativeinteger$. Then, we can write~\eqref{eq::propag_central_Expanded-c}~as
\begin{align}
  \vect{P}_{i,j}^{\prpg}(k+1)=&\vect{\Phi}^i(k+1)\,
  \vect{\Pi}_{i,j}(k)\,\vect{\Phi}^j(k+1)^\top,\label{eq::alternate-EKF-equations-a}
 \end{align}
and~\eqref{eq::RobotCovarUpdate-a}, \eqref{eq::RobotCovarUpdate-b}  and
\eqref{eq::RobotCovarUpdate-c}, respectively,~as
\begin{subequations}\label{eq::alternative-EKF-update-xi-Pi}
\begin{align}
  \!\!\Hvect{x}^{i\updt}\!(k\!+\!1)\!=\,&
  \Hvect{x}^{i\prpg}\!(k\!+\!1)\!+\!\vect{\Phi}^{i}\!(k\!+\!1)\vect{\Gamma}_i(k\!+\!1)
  \Bvect{r}^{a}\!(k\!+\!1),\label{eq::alternate-EKF-equations-b}\\
  \!\!\vect{P}^{i\updt}(k\!+\!1)\!=\,&\vect{P}^{i\prpg}(k+1)-\label{eq::alternate-EKF-equations-c}\\
  &\vect{\Phi}^{i}(k+1)
  \vect{\Gamma}_{i}(k+1)\vect{\Gamma}_i^\top(k+1)
  \vect{\Phi}^i(k+1)^\top\!\!\!,\nonumber\\
    \vect{P}_{i,j}^{\updt}(k+1)=\,&\vect{\Phi}^i(k+1)\,\vect{\Pi}_{i,j}(k+1)\,
  \vect{\Phi}^j(k+1)^\top,\label{eq::alternate-EKF-equations-d}
 \end{align}
\end{subequations}
 for $i\in\VV$ and $j\!\in\!\VV\backslash\{i\}$, where
$\Bvect{r}^{a}(k\!+\!1)={\vect{S}_{a,b}}\!\!^{-\frac{1}{2}}\vect{r}^{a}(k\!+\!1)$. 
\end{thm}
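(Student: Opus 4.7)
The plan is to establish by induction on $k\in\nonnegativeinteger$ that the factored identities \eqref{eq::alternate-EKF-equations-a} and \eqref{eq::alternative-EKF-update-xi-Pi} hold at every timestep. The base case $k=0$ is immediate: with $\vect{\Phi}^i(0)=\vect{I}_{n^i}$ and $\vect{\Pi}_{i,j}(0)=\vect{0}$, the factorization $\vect{\Phi}^i(0)\vect{\Pi}_{i,j}(0)\vect{\Phi}^j(0)^\top=\vect{0}$ trivially matches the prescribed initial condition $\vect{P}_{i,j}^{\updt}(0)=\vect{0}$, and there is nothing to check on the pose estimate or self-covariance.

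For the inductive step, I assume that $\vect{P}_{i,j}^{\updt}(k)=\vect{\Phi}^i(k)\,\vect{\Pi}_{i,j}(k)\,\vect{\Phi}^j(k)^\top$ for all $i\neq j$. The propagation identity \eqref{eq::alternate-EKF-equations-a} then drops out by substituting this hypothesis into \eqref{eq::propag_central_Expanded-c} and using the recursion $\vect{\Phi}^i(k+1)=\vect{F}^i(k)\vect{\Phi}^i(k)$ from \eqref{eq::Phi}. The heart of the inductive step is to show that the Kalman gain \eqref{eq::K_gain_robotwise} admits the compact factored form $\vect{K}_i(k+1)=\vect{\Phi}^i(k+1)\,\vect{\Gamma}_i(k+1)\,\vect{S}_{a,b}^{-1/2}$, where $\vect{S}_{a,b}^{-1/2}$ denotes the symmetric principal square root. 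Once this identity is in hand, substituting it into \eqref{eq::RobotCovarUpdate-a}--\eqref{eq::RobotCovarUpdate-c} and using $\vect{S}_{a,b}^{-1/2}\vect{S}_{a,b}\vect{S}_{a,b}^{-1/2}=\vect{I}$ immediately yields \eqref{eq::alternate-EKF-equations-b} and \eqref{eq::alternate-EKF-equations-c}, while the cross-covariance update combined with \eqref{eq::Pi} produces \eqref{eq::alternate-EKF-equations-d}, closing the induction.

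The main obstacle is the factored Kalman-gain identity, which splits into three cases according to whether $i\in\VV\backslash\{a,b\}$, $i=a$, or $i=b$. The non-participating case $l\neq a,b$ is the clean one: both $\vect{P}_{l,a}^{\prpg}$ and $\vect{P}_{l,b}^{\prpg}$ already admit the factored form via the inductive hypothesis and \eqref{eq::alternate-EKF-equations-a}, so $\vect{\Phi}^l(k+1)$ pulls out on the left and matching against \eqref{eq::barD-i} closes the case. The genuine difficulty is the measuring robots $a$ and $b$: the gain $\vect{K}_a$ contains the own-covariance $\vect{P}^{a\prpg}(k+1)$, which is \emph{not} a cross-covariance and does not factor through any $\vect{\Pi}$ quantity. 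The resolution is to invoke the hypothesis that every $\vect{F}^i(k)$ is invertible, so that $\vect{\Phi}^a(k+1)=\vect{F}^a(k)\cdots\vect{F}^a(0)$ is invertible as well; one may then formally insert the identity $\vect{\Phi}^a(k+1)\vect{\Phi}^a(k+1)^{-1}$ and write $\vect{P}^{a\prpg}=\vect{\Phi}^a(k+1)\big(\vect{\Phi}^a(k+1)^{-1}\vect{P}^{a\prpg}\big)$, which is exactly the inverse-carrying term appearing in the definition of $\vect{\Gamma}_a$ in \eqref{eq::barD-a}. A symmetric argument recovers \eqref{eq::barD-b}. This is precisely where the invertibility assumption on $\vect{F}^i$ is consumed; every other manipulation in the induction is a routine algebraic rearrangement involving the symmetric square root $\vect{S}_{a,b}^{-1/2}$ and careful bookkeeping of the sign and transpose in the cross-covariance update.
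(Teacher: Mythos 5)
Your plan is correct and follows essentially the same route as the paper's proof: an induction on $k$ whose core is the factored gain identity $\vect{K}_i(k+1)=\vect{\Phi}^i(k+1)\vect{\Gamma}_i(k+1)\vect{S}_{a,b}^{-1/2}$, established case-by-case for $i\in\VV\backslash\{a,b\}$, $i=a$, and $i=b$, with the invertibility of $\vect{F}^i$ (hence of $\vect{\Phi}^i$) consumed exactly where you place it, namely in writing $\vect{P}^{a\prpg}=\vect{\Phi}^a(k+1)\big(\vect{\Phi}^a(k+1)^{-1}\vect{P}^{a\prpg}\big)$. The only point to watch in executing the ``careful bookkeeping of the sign'' you mention is that closing the induction for \eqref{eq::alternate-EKF-equations-d} requires the recursion $\vect{\Pi}_{i,j}(k+1)=\vect{\Pi}_{i,j}(k)-\vect{\Gamma}_i(k+1)\vect{\Gamma}_j(k+1)^\top$ (a minus sign, as used in the paper's proof and in Algorithm~\ref{alg::ouralgpar}, rather than the plus sign printed in \eqref{eq::Pi}).
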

\blue{The proof of this theorem is given in Appendix. Inevitability of $\vect{F}^i(k)$ is generic and holds for a wide
  class of motion models e.g., non-holonomic robots. Note here that using~\eqref{eq::alternate-EKF-equations-d}, $\vect{S}_{a,b}$ in~\eqref{eq::S_ab} can be expressed equivalently as}
\begin{align}\label{eq::Sab_DCL}
  \vect{S}_{a,b}=&\,\vect{R}^{a}(k\!+\!1)\!+\!\Tvect{H}_{a}\vect{P}^{a\prpg}(k\!+\!1)\Tvect{H}_{a}^\top\!+\!\Tvect{H}_{b}
  \vect{P}^{b\prpg}(k\!+\!1)\Tvect{H}_{b}^\top\nonumber\\
  &+\Tvect{H}_{a}\vect{\Phi}^a(k+1)\vect{\Pi}_{a,b}(k)\vect{\Phi}^b(k\!+\!1)^\top\Tvect{H}_{b}^\top+\\
  &\Tvect{H}_{b}\vect{\Phi}^b(k\!+\!1)\vect{\Pi}_{b,a}(k){\vect{\Phi}^a(k\!+\!1)}^\top\Tvect{H}_{a}^\top.\nonumber
  \end{align}

\vspace{-0.05in}
\section{\blue{A server assisted distributed cooperative localization}}\label{sec::partially-decentralized}\vspace{-0.08in}
\blue{In this section, we propose a novel distributed cooperative localization algorithm  
in which each agent maintains its
  own local state estimate for autonomy, incurs only $O(1)$ processing and
  storage costs, and needs to communicate only when there is an
  inter-agent relative measurement. Our proposed solution is a server assisted distributed implementation of \splitEKF~CL (\PDsplitEKF for short)  which is given in Algorithm~\ref{alg::ouralgpar}.  For clarity of presentation, we are assuming that at most there
  is one relative measurement at each time in the team. To process
  multiple synchronized measurements, we use \emph{sequential
    updating} (c.f.~\cite[ch. 3]{CTL:66},\cite{YB-PKW-XT:11}), for details see Appendix. 

  In \PDsplitEKF, every robot $i\in\VV$ maintains
  and propagates its own propagated state estimate~\eqref{eq::propag_central_Expanded-a} and covariance
  matrix~\eqref{eq::propag_central_Expanded-b}, as well as, the
  variable $\vect{\Phi}^i\in\real^{n^i\times n^i}$~\eqref{eq::Phi}. Since these variables are
  local, the propagation stage is fully decoupled and there is no need for communication at this stage. To free the
  robots from maintaining the team cross-covariances, \PDsplitEKF assigns a server to
  maintain and to update $\vect{\Pi}_{i,j}$'s~\eqref{eq::Pi},
  the main source of high processing and storage costs. 
  The communication between robots and the server is only required when
  there is a relative measurement in the~team. 
  When robot $a$ takes relative measurement from robot $b$,
robot $a$ informs the server.  Then, the server starts the update procedure
by taking the following actions. First, it acquires the $\lmssg$~\eqref{eq::DCL-lmssg}
from robots $a$ and $b$, which 
is of order $O(1)$ in terms of the size of the team.  
Then, using this
information along with its locally maintained $\vect{\Pi}_{i,j}$'s, server calculates
and sends to
each robot $i\in\VV$ its corresponding update message~\eqref{eq::updt_mssg}
so that the robot can update its local estimates
using~\eqref{eq::alternative-EKF-update-xi-Pi}. 
It also updates its
local $\vect{\Pi}_{i,j}$ using \eqref{eq::Pi}, for all
$i\in\VV\backslash\{N\}$ and $j\in\{i+1,\cdots,N\}$--because of the
symmetry of the joint matrix $\vect{\Pi}$ we only save the upper triangular part of this
matrix. The size of update message for each robot is of order $O(1)$ in terms of the size of the team. We can show that multiple concurrent measurements can be processed jointly at the server and the update message for each robot is still of order $O(1)$, for details see Appedix.} \PDsplitEKF~CL algorithm processes absolute measurements in a similar way to relative measurements, i.e., the robot with the absolute measurement informs the server, which proceeds with the same described updating procedure
and issues the update message~\eqref{eq::updt_mssg} to every
robot~$i\!\in\!\VV$. 
 
 \blue{A fully decentralized implementation of the
 \splitEKF~CL has been proposed in~\cite{SSK-SF-SM:16}. In this scheme, instead of a server each agent keeps a local copy of $\vect{\Pi}_{l,j}(k)$'s which results in an {$O(N^2)$} storage and
{$O(N^2\times N_z)$} processing cost per robot with {$N_z$} the total
number of relative measurement in the team in a given time.  The
downside of the algorithm of~\cite{SSK-SF-SM:16} is that any incidence
of message dropout at each agent causes disparity between the local
copy of $\vect{\Pi}_{l,j}(k)$'s at that agent and the local copies of
the rest of the team, jeopardizing the integrity of the
decentralized~implementation.  In the next section we show that \PDsplitEKF has robustness to message dropouts.
  }


    
    
\section{Accounting for in-network message
  dropouts}\label{sec::C-CL_miss}\vspace{-0.08in}
\blue{\PDsplitEKF CL described so far operates based on the assumption that
at the time of measurement update, all the robots can receive the
update message of the server, i.e., $\VV_{\text{missed}}(k+1)$, the set
of agents missing the update message of the server at timestep $k+1$, is
empty.  It is straightforward to see that \PDsplitEKF~CL algorithm is robust to permanent
team member dropouts. The server only suffers from a processing and
communication cost until it can confirm that the dropout is
permanent. In what follows, we study the robustness of Algorithm~\ref{alg::ouralgpar} against occasional
communication link failures between robots and the server. Specifically we show that 
Algorithm~\ref{alg::ouralgpar} has robustness to message dropout with formal guarantees that the updated estimates of the
  robots receiving the update message are of minimum variance in
    a first-order approximate sense at that given timestep.}
Our guarantees are based on the assumption that the two
robots involved in a relative measurement can both communicate with
the server at the same time otherwise, we discard that measurement. We
base our study on analyzing a EKF for joint CL in which at some update
times, we do not update the estimate of some of the robots. In our
server assisted distributed  implementation, these robots
are those which miss the update-message of the server and as such they
are not updating their~estimates.
    
\setlength{\textfloatsep}{5pt}
\begin{algorithm}[!t]
{\scriptsize
\caption{{ \PDsplitEKF~CL}}
\label{alg::ouralgpar}
\begin{algorithmic}[1]
\Require
 Initialization ($k=0$): 
\begin{align*}
&\text{Robot~} i\in\VV:~ \Hvect{x}^{i\updt}(0)\in\real^{n^i},~~\vect{P}^{i\updt}(0)\in\mathbb{S}_{>0}^{n^i},~~\vect{\Phi}^i(0)=\vect{I}_{n^i},\\
&~~~~~\quad\text{Server}:~\vect{\Pi}^i_{i,j}(0)=\vect{0}_{n^i\times n^j}, ~i\in\VV\backslash\{N\},~~j\in\{i+1,\cdots,N\}.
\end{align*}

\hspace{-0.38in}\noindent\textbf{Iteration $k$}
\State \textbf{Propagation}: Every robot $i\in\VV$ proceeds by
$$(\Hvect{x}^{i\prpg}\!(k\!+\!1), \vect{\Phi}^i(k\!+\!1),\vect{P}^{i\prpg}\!(k\!+\!1) )\xleftarrow[\eqref{eq::propag_central_Expanded-a}, \eqref{eq::propag_central_Expanded-b},\eqref{eq::Phi}]{\text{using}}(\Hvect{x}^{i\updt}(k),\vect{\Phi}^i(k),\vect{P}^{i\updt}(k),\vect{u}_{m}^i(k) ).$$

\State \textbf{Update}:
\begin{itemize}[leftmargin=*]
\item if there is  no relative measurements in the network
\begin{align*}
&\text{Robot~} i\!\in\!\VV:~ \Hvect{x}^{i\updt}(k+1)=\Hvect{x}^{i\prpg}(k+1),~\vect{P}^{i\updt}(k+1)=\vect{P}^{i\prpg}(k+1),\\
&~~~~\quad\text{Server}:~\vect{\Pi}_{i,j}(k+1)=\vect{\Pi}_{i,j}(k), ~~i\!\in\!\VV,~j\in\VV\backslash\{i\}.
\end{align*}
\item if $a\xrightarrow{k+1}b$, $a$ informs the server. The server asks for the following information from robots $a$ and $b$, respectively, 
\begin{align}\label{eq::DCL-lmssg}
&\lmssg^a=\Big(\vect{z}_{a,b},\Hvect{x}^{a\prpg}(k+1), \vect{P}^{b\prpg}(k+1), \vect{\Phi}^{a}(k+1)\Big),\nonumber\\
&\lmssg^b=\Big(\Hvect{x}^{b\prpg}(k+1),\vect{P}^{b\prpg}(k+1), \vect{\Phi}^{b}(k+1)\Big).
\end{align} 
Then, server compute 
$$\vect{S}_{a,b},\vect{r}^{a},\vect{\Gamma}_{i}\xleftarrow[\eqref{eq::Sab_DCL},\eqref{eq::reletive_Residual},\eqref{eq::intermidate_var2}]{\text{using}}(\lmssg^a,\lmssg^b).$$
and $\Bvect{r}^{a}\!=\!(\vect{S}_{a,b})^{-\frac{1}{2}}\vect{r}^{a}$. Server passes the following data to every robot $i\in\VV$,
\begin{align}\label{eq::updt_mssg}
&\textsl{update-message}^i=\big(\Bvect{r}^{a}\,,\,\vect{\Gamma}_i\big).
\end{align}
Robot $i\in\VV\backslash\VV_{\text{missed}}(k+1)$ then updates its local state estimate according to 
\begin{subequations}
\begin{align}
\Hvect{x}^{\!i\updt}(k+1)=\,&\Hvect{x}^{i\prpg}(k+1)+\\
&\vect{\Phi}^{i}(k\!+\!1)\textsl{update-message}^i(2)\,\textsl{update-message}^i(1),\nonumber\\ 
 \vect{P}^{\!i\updt}(k+1)=\,&\vect{P}^{i\prpg}\!(k+1)-\\
 &\vect{\Phi}^{\!i}(k\!+\!1)\textsl{update-message}^i(2)\,\textsl{update-message}^i(2)^\top\vect{\Phi}^i\!(k\!+\!1)^{\!\top}.\nonumber
\end{align}
\end{subequations}
The server updates its local variables, for $i\in\VV\backslash\{N\},~j\in\{i+1,\cdots,N\}$:
\begin{align*}
\vect{\Pi}_{i,j}(k\!+\!1)&=\vect{\Pi}_{i,j}(k)-\vect{\Gamma}_i \vect{\Gamma}_j^\top, ~ \text{if~}(i,j)\not\in\VV_{\text{missed}}(k\!+\!1)\!\times\!\VV_{\text{missed}}(k\!+\!1).
\end{align*}
\end{itemize}
\State $k \leftarrow k+1$
\end{algorithmic}
$\VV_{\text{missed}}(k+1)$ is the set of agents missing the update message at timestep $k+1$.
}
\end{algorithm}

\blue{In what follows, the state estimate equations of the robots involved
in a relative measurement do always get updated. }
Without loss of generality, assume that we do not update the state estimate of robots
{$\{m+1,\cdots, N\}$}, for $2<m<N+1$ using the relative measurement
taken by robot {$a \until{m}$} from robot {$b \until{m}$} at some time
$k+1$. That is, assume that agents
{$\VV_{\text{missed}}(k+1)=\{m+1,\cdots, N\}$} have missed the update
of the server at time $k+1$.  The propagation stage of the Kalman filter
is independent of the observation process, and thus we leave it as
is, see~\eqref{eq::propag_central_Expanded-a}-\eqref{eq::propag_central_Expanded-c}. The
following result gives the minimum variance update equation for robots
$\{1,\cdots,m\}$.  Recall that, at any update incident at timestep
$k$, the EKF gain {$\vect{K}$} minimizes
$\text{Trace}(\vect{P}^{\updt}(k))$, where $\vect{P}^{\updt}(k)$
in~\eqref{eq::Kgain_central} is an approximation of
$\text{E}[(\vect{x}(k)-\vect{x}^{\updt}(k))(\vect{x}(k)-\vect{x}^{\updt}(k))^\top]$--an
approximation based on a system and measurement model linearization
(c.f.~\cite[page 146]{JLC-JLJ:11}). \blue{The following result plays a
similar role.}

\begin{thm}[\blue{Joint EKF CL in the presence of message dropouts}]\label{thm::partial_update}
 Consider a joint CL via EKF where the
  relative measurement taken by robot
  $a\notin\VV_{\text{missed}}(k+1)$
  from robot $b\notin \VV_{\text{missed}}(k+1)$
  ~at some time $k+1>0$ is used to only update the states of robots
  $\VV\backslash\VV_{\text{missed}}(k\!+\!1)=\{1,\cdots,m\}$, i.e.,
  \begin{subequations}\label{eq::update-cent-miss}
    \begin{align}
      \Hvect{x}^{i\updt}(k\!+\!1)=&\Hvect{x}^{i\prpg}(k\!+\!1)+
      \vect{K}_i(k\!+\!1)\vect{r}^{a}(k\!+\!1),\nonumber\\
      &
     \qquad \qquad\quad i\in\VV\backslash\VV_{\text{missed}}(k+1)\label{eq::XRobotCovarUpdate-1:N-1}\\
      \Hvect{x}^{i\updt}(k\!+\!1)=&\Hvect{x}^{i\prpg}(k\!+\!1)\quad
      i\in\VV_{\text{missed}}(k+1).\label{eq::RobotCovarUpdate-N}
    \end{align} 
  \end{subequations}
  Let
  $\vect{K}_{1:m}=[\vect{K}_1^\top,\cdots,\vect{K}_m^\top]^\top$. Then,
  the Kalman gain $\vect{K}_{1:m}$ that minimizes $\text{Trace}(\vect{P}^{\updt}(k+1))$, for $i\!\in\!\VV\backslash\VV_{\text{missed}}(k+1)$, is 
  \begin{align}\label{eq::gain_partial_update}
    \vect{K}_i=(\vect{P}_{i, b}^{\prpg}(k+1)
    \Tvect{H}_b^\top+&\vect{P}_{i,a}^{\prpg}(k+1)\Tvect{H}_a^\top)\,{\vect{S}_{a,b}}^{-1}.\end{align}
  Moreover, the team covariance update is given by
  \begin{subequations}
 \begin{align}
    &\vect{P}^{i\updt}(k\!+\!1)\!=\label{eq::robot-covar-updt-miss}\\~~&\begin{cases}
    \vect{P}^{i\prpg}(k\!+\!1),~~~~~\qquad
    i\in\VV_{\text{missed}}(k+1),\\
    \vect{P}^{i\prpg}(k\!+\!1)\!-
    \!\vect{K}_i
    \vect{S}_{a,b}(k\!+\!1)\vect{K}_i(k\!+\!1)^\top,~\text{otherwise}.
    \end{cases}\nonumber\\
     & \vect{P}_{i,j}^{\updt}(k\!+\!1)\!  =\label{eq::robot-cross-covar-updt-miss} \\
    ~~&\begin{cases}
      \vect{P}_{i,j}^{\prpg}(k\!+\!1),~~~~~\quad (i,j)\in\VV_{\text{missed}}(k\!+\!1)\times\VV_{\text{missed}}(k\!+\!1),\\
      \vect{P}_{i,j}^{\prpg}(k\!+\!1)\!-\!\vect{K}_i(k\!+\!1),
      \vect{S}_{a,b}(k\!+\!1)\vect{K}_j(k\!+\!1)^\top,~\text{otherwise}.
    \end{cases}\nonumber
  \end{align} 
  \end{subequations}
     where for $i\!\in\!\VV_{\text{missed}}(k\!+\!1)$ we defined and used the \emph{pseudo}~gain
\begin{equation}\label{eq::pseudo-gain}
  \vect{K}_i=(\vect{P}_{i,b}^{\prpg}(k+1)
  \Tvect{H}_b^\top+\vect{P}_{i,a}^{\prpg}(k+1)\Tvect{H}_a^\top)
  \,{\vect{S}_{a,b}}^{-1}.  
\end{equation} 
\end{thm}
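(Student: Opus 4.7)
The plan is to carry out the standard minimum-variance derivation of the Kalman gain, restricted to the updated robots, and then read off the block-covariance expressions directly from the linearized error dynamics. I would proceed in four short steps. First, I linearize the measurement model~\eqref{eq::measur_i,j} around the propagated estimates and, using~\eqref{eq::update-cent-miss}, write the a~posteriori error $\Tvect{x}^{i\updt}=\vect{x}^{i}-\Hvect{x}^{i\updt}$ as
\begin{align*}
\Tvect{x}^{i\updt}=\Tvect{x}^{i\prpg}-\vect{K}_i\bigl(\Tvect{H}_a\Tvect{x}^{a\prpg}+\Tvect{H}_b\Tvect{x}^{b\prpg}+\vect{\nu}^a\bigr),\quad i\in\{1,\dots,m\},
\end{align*}
and $\Tvect{x}^{i\updt}=\Tvect{x}^{i\prpg}$ for $i\in\VV_{\text{missed}}(k+1)$. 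This is the usual first-order approximation underlying the EKF and is the only place approximation enters, which also explains the qualifier ``first-order approximate sense'' in Section~\ref{sec::C-CL_miss}.

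Second, I would compute the diagonal blocks $\vect{P}^{i\updt}(k+1)$. For $i\in\VV_{\text{missed}}(k+1)$ one trivially has $\vect{P}^{i\updt}=\vect{P}^{i\prpg}$, independent of any gain. For $i\in\{1,\dots,m\}$, setting $\vect{M}_i=\vect{P}_{i,a}^{\prpg}\Tvect{H}_a^{\top}+\vect{P}_{i,b}^{\prpg}\Tvect{H}_b^{\top}$ and exploiting the uncorrelatedness of $\vect{\nu}^a$ with the prior error, a direct expansion gives
\begin{align*}
\vect{P}^{i\updt}=\vect{P}^{i\prpg}-\vect{K}_i\vect{M}_i^{\top}-\vect{M}_i\vect{K}_i^{\top}+\vect{K}_i\vect{S}_{a,b}\vect{K}_i^{\top}.
\end{align*}
Because $\text{Trace}(\vect{P}^{\updt}(k+1))$ collects only diagonal blocks and each $\vect{K}_i$ appears only in its own $\vect{P}^{i\updt}$, the trace minimization over $\vect{K}_{1:m}$ decouples into $m$ independent quadratic problems, one per updated robot.

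Third, using $\vect{S}_{a,b}\succ 0$ (which follows from $\vect{R}^{a}\succ 0$ in~\eqref{eq::S_ab}), either a matrix completion of squares or the stationarity condition $\partial\,\text{Trace}(\vect{P}^{i\updt})/\partial\vect{K}_i=0$ gives the unique minimizer $\vect{K}_i=\vect{M}_i\vect{S}_{a,b}^{-1}$, which is exactly~\eqref{eq::gain_partial_update}. Back-substituting collapses the quadratic to $\vect{P}^{i\updt}=\vect{P}^{i\prpg}-\vect{K}_i\vect{S}_{a,b}\vect{K}_i^{\top}$, establishing~\eqref{eq::robot-covar-updt-miss}.

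Fourth, I would obtain~\eqref{eq::robot-cross-covar-updt-miss} by computing $\mathrm{E}[\Tvect{x}^{i\updt}\Tvect{x}^{j\updt\,\top}]$ in three exhaustive cases. When $i,j\in\{1,\dots,m\}$, the standard computation yields $\vect{P}_{i,j}^{\prpg}-\vect{K}_i\vect{S}_{a,b}\vect{K}_j^{\top}$. When both $i,j\in\VV_{\text{missed}}(k+1)$, nothing happens and one recovers $\vect{P}_{i,j}^{\prpg}$, the first branch of~\eqref{eq::robot-cross-covar-updt-miss}. The mixed case, with $i\in\{1,\dots,m\}$ and $j\in\VV_{\text{missed}}(k+1)$ (or vice versa), produces $\vect{P}_{i,j}^{\prpg}-\vect{K}_i\vect{M}_j^{\top}$; this equals $\vect{P}_{i,j}^{\prpg}-\vect{K}_i\vect{S}_{a,b}\vect{K}_j^{\top}$ precisely under the pseudo-gain definition~\eqref{eq::pseudo-gain}, which is the device that makes~\eqref{eq::robot-cross-covar-updt-miss} uniform across all cases. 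The main obstacle, in my view, is organisational rather than conceptual: one must check that contributions of $\vect{K}_i$ in off-diagonal blocks really do not leak into the trace objective (so the $m$-way decoupling is clean), and verify that the pseudo-gain absorbs the mixed case into the same formula as the fully-updated case. Once both are handled, the remainder is the classical per-block Kalman gain derivation.
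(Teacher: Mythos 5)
Your proposal is correct and follows essentially the same route as the paper's proof: both linearize the error dynamics under the partial update~\eqref{eq::update-cent-miss}, write the posterior covariance as a quadratic in the gain, obtain~\eqref{eq::gain_partial_update} from stationarity of the trace, and recover the mixed block of~\eqref{eq::robot-cross-covar-updt-miss} by inserting $\vect{S}_{a,b}\vect{S}_{a,b}^{-1}$ so that the pseudo-gain~\eqref{eq::pseudo-gain} absorbs that case into the uniform formula. The only difference is organizational: the paper differentiates the trace with respect to the stacked gain $\vect{K}_{1:m}$ using the block form $(\vect{I}_m-\vect{K}_{1:m}\Bvect{H})$, whereas you note that the objective separates into $m$ independent per-robot quadratics --- a valid and slightly cleaner way to reach the same stationarity condition.
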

\blue{The proof of this theorem is given in Appendix. The partial updating equations~\eqref{eq::update-cent-miss}-\eqref{eq::pseudo-gain} are the same as the joint EKF CL~\eqref{eq::central-robotwise} except that the state estimate and corresponding covariance matrix for agents missing the update message and also the cross-covariance matrices between those agents do not get updated. 
As such, the \splitEKF representation for~\eqref{eq::update-cent-miss}-\eqref{eq::pseudo-gain} is the same as the one for the joint EKF CL~\eqref{eq::central-robotwise} except that for   $i\in\VV_{\text{missed}}(k+1)$ we have 
\begin{align*}
\Hvect{x}^{i\updt}(k\!+\!1)=&\Hvect{x}^{i\prpg}(k\!+\!1), ~\vect{P}^{i\updt}(k+1)=\vect{P}^{i\prpg}(k+1),\\
\vect{\Pi}_{i,j}(k\!+\!1)=&\vect{\Pi}_{i,j}(k),\quad\quad\quad j\!\in\!\VV_{\text{missed}}(k\!+\!1)\backslash\{i\}.
\end{align*}
Therefore, for none empty $\VV_{\text{missed}}(k+1)$, we can implement the \PDsplitEKF~CL algorithm
exactly as described in Algorithm~\ref{alg::ouralgpar}.
We conclude then that  \PDsplitEKF~CL algorithm is robust to
message dropouts and the estimates of the robots receiving the update
message, as stated above, are minimum~variance, in a first-order
approximate sense.  }

\section{Numerical and experimental evaluations}
\blue{We demonstrate the performance of the proposed \PDsplitEKF~CL algorithm
with and without occasional communication failure in simulation and compare it to the performance of dead reckoning only localization and that of the algorithm of~\cite{HL-FN:13}. We use a team of four robots moving on a flat terrain 
on the square
helical paths shown in Fig.~\ref{fig::simulation} (a) and (b)  traversed in $[0,300]$ seconds (crosses show the start points). 
The standard deviation of the linear
(resp. rotational) velocity measurement noise of robots $\{1,2,3,4\}$ respectively are assume
to be {$\{35\%,30\%,25\%,20\%\}$} of the linear (resp. {$\{25\%,20\%,20\%,15\%\}$} of the rotational)
velocity of the robot. 
For the measurement/communication scenario in Table~\ref{table::simulation_time}, the root mean square (RMS) position error calculated from {$M\!=\!50$} Monte Carlo runs is depicted  in Fig.~\ref{fig::simulation}  (c)-(f).
As seen, in comparison to dead reckoning localization, CL improves the accuracy of the state estimates. As expected, by keeping an accurate account of the cross covariances, the \PDsplitEKF CL algorithm produces more accurate localization results than the algorithm of
~\cite{HL-FN:13}. Recall that the advantage of the algorithm of~\cite{HL-FN:13} is its relaxed connectivity condition. However, since this algorithm accounts for missing cross-covariance information by conservative estimates, its localization accuracy suffers. Also in this algorithm since only the landmark robots (the robots that relative measurements are taken from them) update their estimates, the robots taking the relative measurement does not benefit from CL. Fig.~\ref{fig::simulation}  (c)-(f) also demonstrate the robustness of  \PDsplitEKF CL to communication failure, i.e., the robots receiving the update message benefit from CL and the disconnected robot once connected can resume correcting its state estimates. Here, it is also worth recalling that \PDsplitEKF~CL without link failure, similar to algorithms of~\cite{SIR-GAB:02} and \cite{SSK-SF-SM:16},  recovers exactly the state estimate of the joint EKF CL~\eqref{eq::central-robotwise}.  However, unlike the algorithms of ~\cite{SIR-GAB:02} and \cite{SSK-SF-SM:16} \PDsplitEKF~CL has robustness to the communication failure. }

\blue{
\setlength\extrarowheight{4pt}
 \begin{table}[b]\renewcommand{\arraystretch}{0.7}\scriptsize
    \caption{{\small Time table for exteroceptive 
      measurement times and the disconnected robots. 
      }
    }\label{table::simulation_time}\vspace{-0.06in}
  \centering
  \begin{tabular}{| m{1.3cm} ||  m{0.5cm} | m{0.5cm} |m{0.8cm}|m{0.8cm}|m{0.8cm}|m{0.8cm}|}
    \hline
    Time (sec.)&$\!\!\!\!(45,50]$&$\!\!\!\!(90,95]$&$\!\!\!\!(135,140]$&$\!\!\!\!(180,185]$&$\!\!\!\!(225,230]$&$\!\!\!\!(270,275]$\\ \hline
    Measurements &
    $\!\!\!\!\!\!\!\left.\begin{array}{c}1\to 2\\ 2\to 3\\3\to 4 \end{array}\right.$&
    $\!\!\!\!\!\!\!\left.\begin{array}{c}3\to 4\\ 4\to1 \end{array}\right.$&
    $\!\!\!\!\left.\begin{array}{c}1\to 2\\ 3\to 4 \end{array}\right.$&
    $\!\!\!\!\left.\begin{array}{c}2\to 3 \end{array}\right.$&
    $\!\!\!\!\left.\begin{array}{c}1\to 2\\ 3\to 4\end{array}\right.$&
    $\!\!\!\!\left.\begin{array}{c}2\to 3\\ 4\to 1\end{array}\right.$
    \\ \hline
   \!\! disconnected from server& none& none& robot $4$&robot $4$&none&none\\       \hline
    \end{tabular}\vspace{-0.1in}
\end{table}

}
\begin{figure}[t!]
  \unitlength=0.5in
  \centering 
  \subfloat[true trajectory of robots 1 and 2]{
    \!  \includegraphics[trim=3 2 4 5,clip,width=0.25\textwidth]{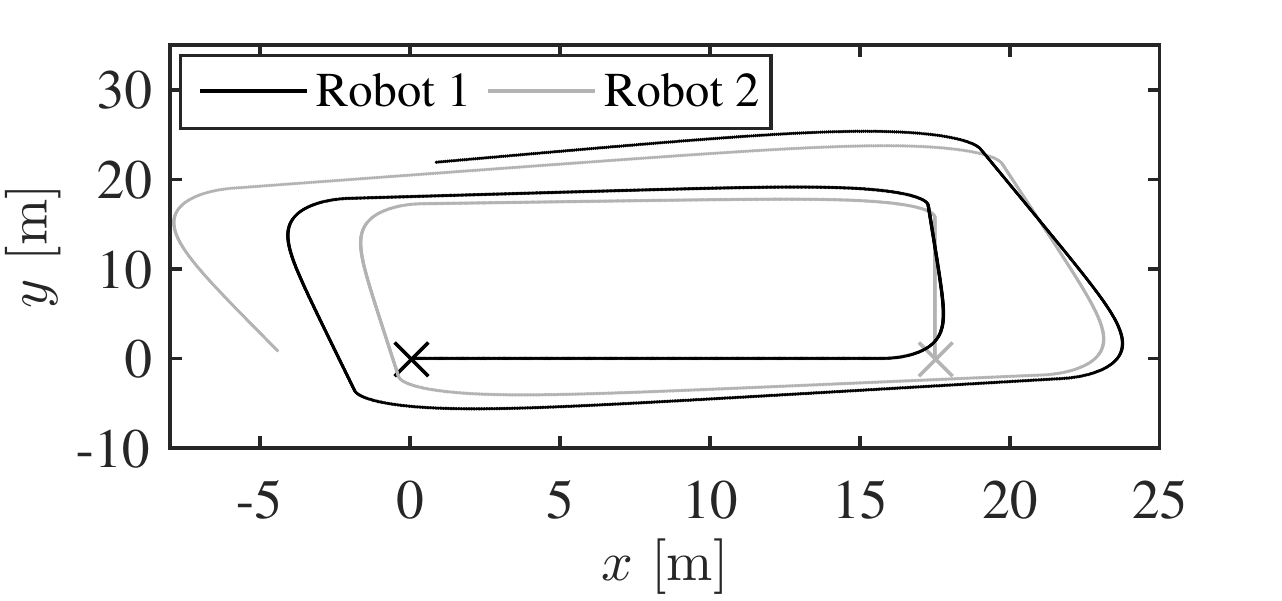}\!\!\!\!\!\!\!
  }
  \subfloat[true trajectory of robots 3 and 4]
  {
    \!  \includegraphics[trim=3 2 4 5,clip,width=0.25\textwidth]{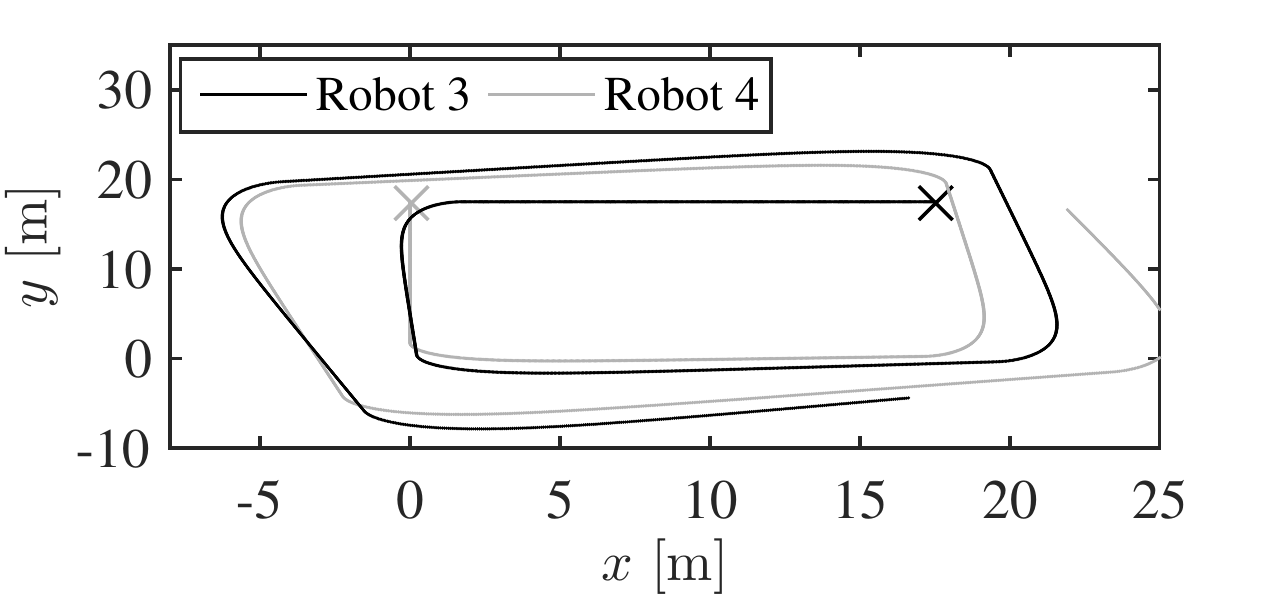}
  }\\ \vspace{-0.08in}
 \subfloat[robot 1]{
 \! \!\!\! \! \!\!\! \includegraphics[trim=3 1 5 3,clip,width=0.24\textwidth]{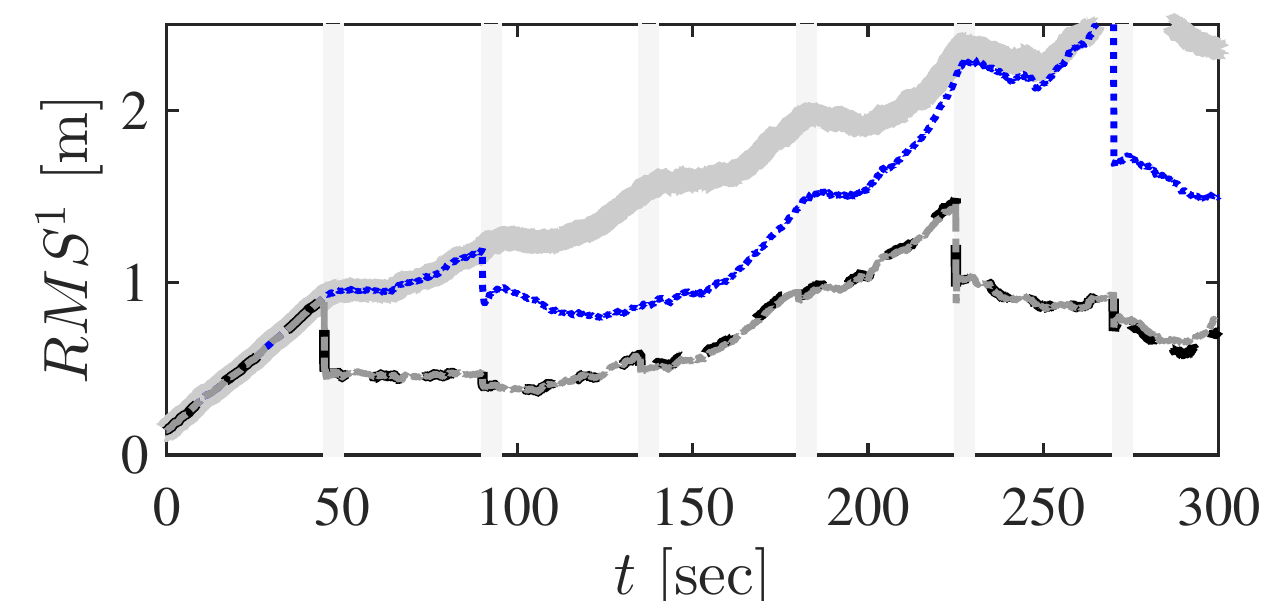}\!\!\!\!
  }
  \subfloat[robot 2]
  {
     \! \includegraphics[trim=3 1 5 3,clip,width=0.24\textwidth]{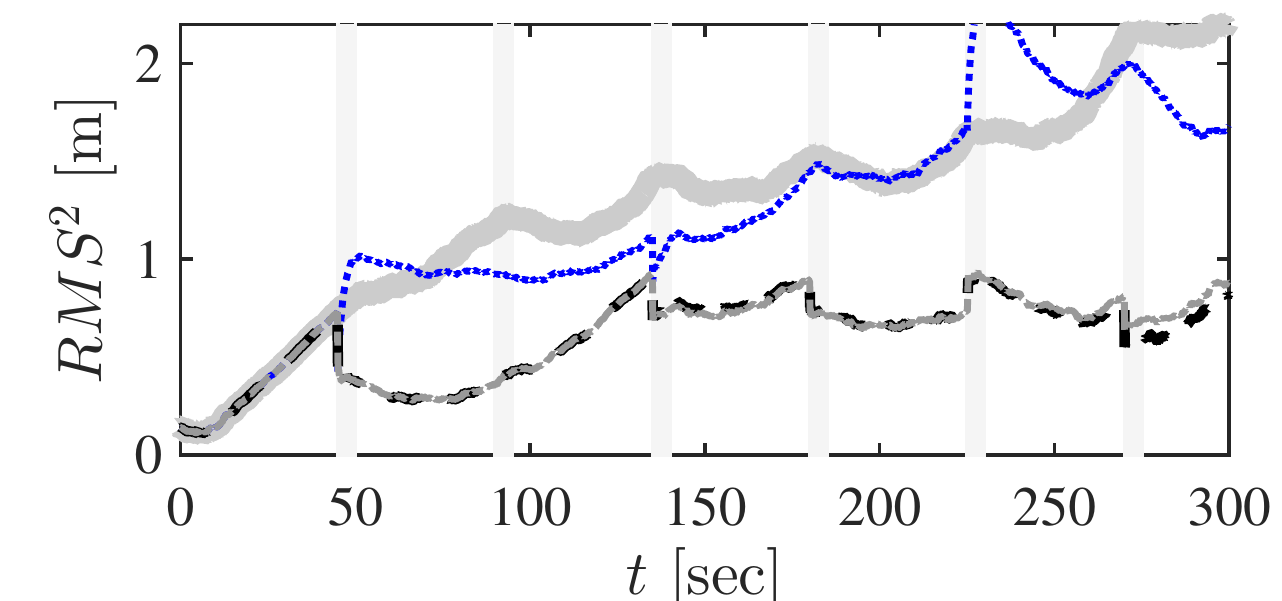}
  }\\ \vspace{-0.06in}
  \subfloat[robot 3]{
  \! \!\!\! \! \!\!\!   \includegraphics[trim=3 1 5 3,clip,width=0.24\textwidth]{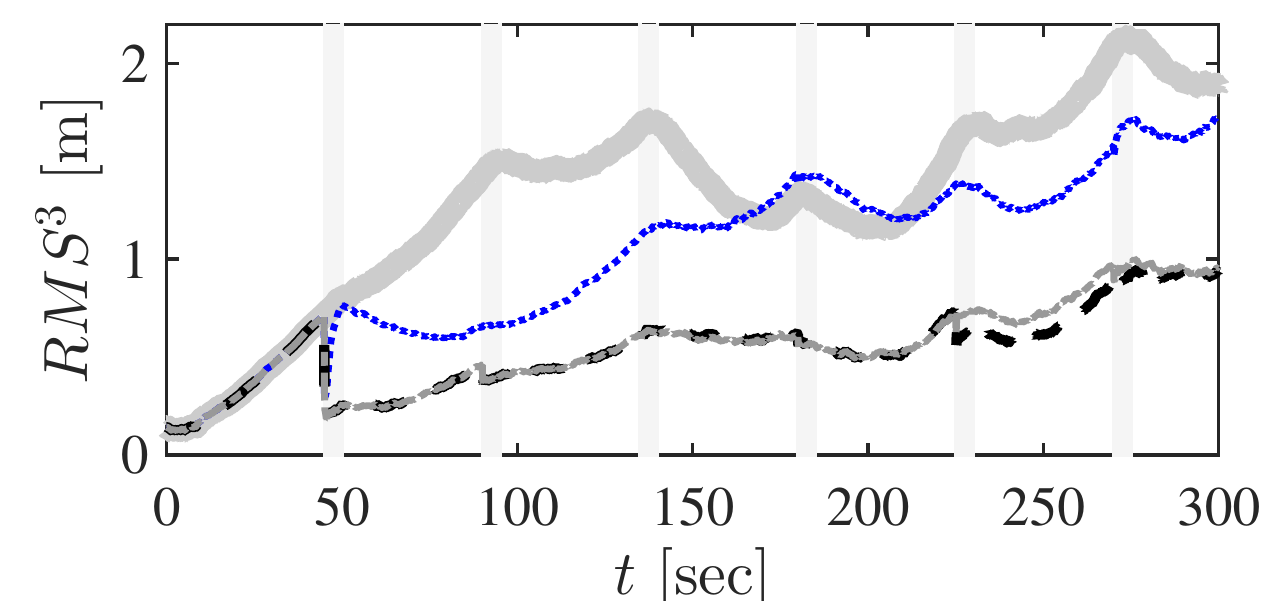}\!\!\!\!
  }
  \subfloat[robot 4]
  {
   \!   \includegraphics[trim=3 1 5 3,clip,width=0.24\textwidth]{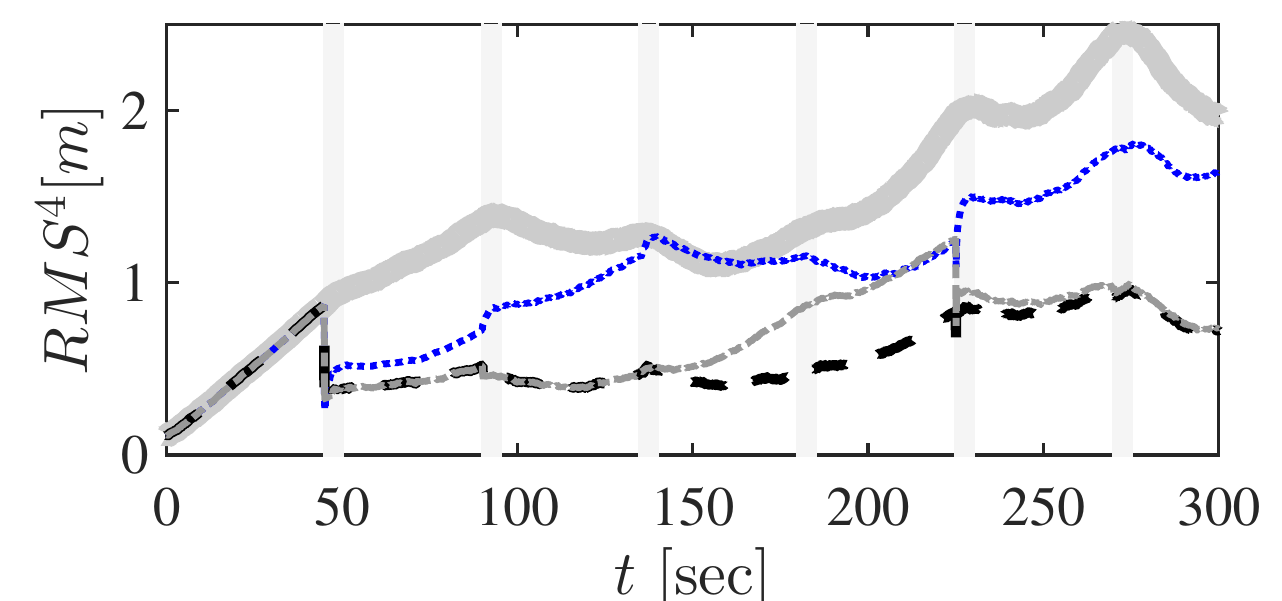}
  }
   \caption{{\small \blue{Simulation results for  position RMS  error for the
    measurement/communication scenario of
    Table~\ref{table::simulation_time} (the orientation RMS error
    behaves similarly and omitted for brevity). In plots (c)-(f),
    ultra thick gray solid line shows the RMS error for dead-recking only; black dashed line and  gray dash dotted line show RMS for \PDsplitEKF CL respectively  in the absence and presence of link failure; and blue dotted line shows the RMS plot for the algorithm of~\cite{HL-FN:13}.
     }  }}\label{fig::simulation}
\end{figure}

\emph{Experimental evaluation}:
we tested the performance of Algorithm~\ref{alg::ouralgpar} and its robustness to message dropouts experimentally, as well.
Our robotic testbed consists of a set of two overhead cameras, a
computer workstation, and $4$ TurtleBot robots (see
Figure~\ref{fig::foto}). This testbed operates under Robot Operating
System (ROS). The overhead cameras, with the help of the set of AR
tags and the ArUco image processing library~\cite{aruco}, are used to
track the motion of the robots and generate a reference trajectory to
evaluate the performance of the CL algorithms. The workstation
serves as the server running a ROS node with the central part of
the~\PDsplitEKF~CL algorithm.
 Each robot has a ROS node that
includes programs to propagate the local filter
equations~\eqref{eq::propag_central_Expanded-a}, \eqref{eq::propag_central_Expanded-b} and~\eqref{eq::Phi} using wheel-encoder measurements and
relative-pose measurements from other robots using the onboard
Kinect camera unit. To take relative-pose measurements, the Kinect
camera also uses a set of AR tags and the ArUco image processing
library. The robots communicate with the workstation via WiFi. The AR
tags are placed on top of the TurtleBot's rack and are arranged on a
cube to provide tags in every horizontal and in top directions. The
accuracy of the visual tag measurements is set to $0.03$ meter for
position and to $6$ degree for orientation. For the propagation stage
of every robot, the local filters of the robots apply the velocity
measurement of their wheel encoders and account the noise with $50\%v$
standard deviation.

\begin{figure}[t!]
  \unitlength=0.5in
  \psfrag*{x}[][cc][1.1]{\renewcommand{\arraystretch}{0.5}\begin{tabular}{c}$\ln(|x^i-\mathsf{x}^\star|)$\\~\end{tabular}}
  \psfrag*{A}[][cc][1.1]{\renewcommand{\arraystretch}{0.5}\begin{tabular}{c}Agents\\~\end{tabular}}
  \psfrag*{t}[][cc][1.1]{\renewcommand{\arraystretch}{2}\begin{tabular}{c}$t$\end{tabular}}
  \centering 
 \!\!\! \subfloat[The robotic testbed]{
   	\includegraphics[width=0.45\linewidth]{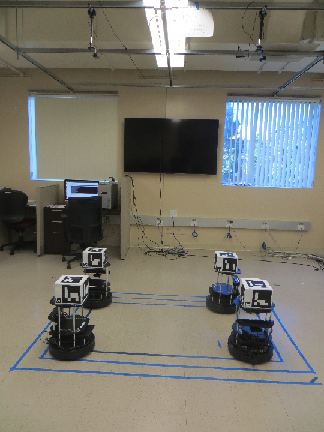}
  }
  \subfloat[Turtlebot with AR tag]
  {
	\includegraphics[width=0.4\linewidth]{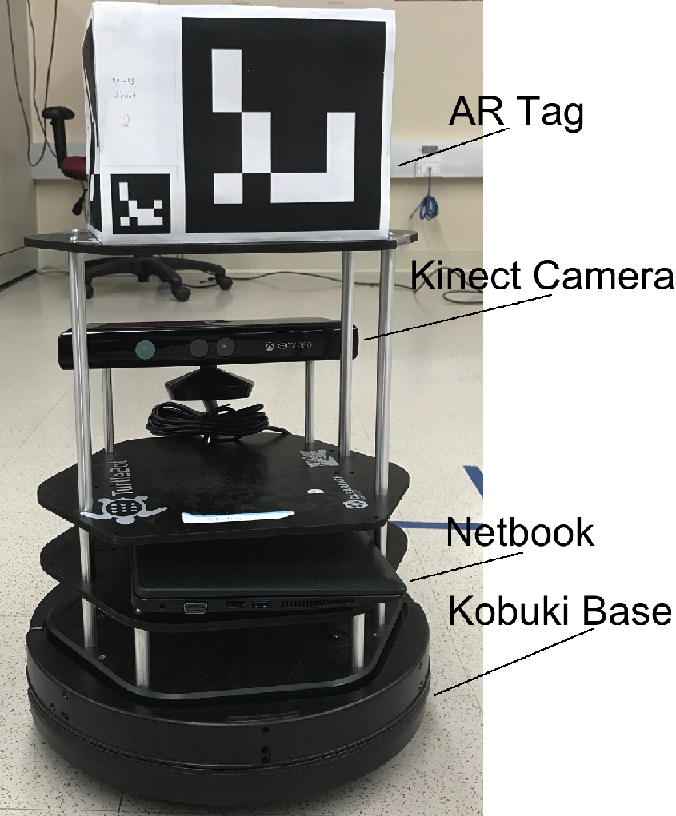}
  }
 	\caption{{\small Setup for the multi-robot test scenario showing the four TurtleBot robots. Every agent features a cube with tags that enable both the Kinect and the overhead camera to take  pose measurements. } }
	\label{fig::foto}
\end{figure}

The robots move in a $2$m $\times$ $3$m area, which is the active
vision zone of our overhead camera system.  The robots move
simultaneously in a counter clock-wise direction along a square
helical path shown in Figure~\ref{fig::foto} and
Figure~\ref{fig::experiment}.  Starting each at one of the four inner
corners of this helical path, marked with large green crosses on
Figure~\ref{fig::experiment}, the robots are programmed to arrive at the next corner ahead of them at
the same time. Along the edge of the track the robots use their wheel
encoder measurements to propagate their motion model while, at the
corners, discrete relative-measurement sequences are executed to
update the local-pose estimates of the robots according to
Algorithm~\ref{alg::ouralgpar}.  In our experiment, the
relative-measurement scenario is for the robot at region $1$ to take
relative measurement from the robot at region $2$, and the robot at
region $2$ to take relative measurement from the robot at region
$3$. The testbed works under perfect communication but we emulate
message dropouts as described below. In our experiment, we execute
the following four estimation filters simultaneously: (a) an overhead
camera tracking to generate the reference trajectory; (b) a
propagation-only filter to demonstrate the accuracy of position
estimates without relative measurements; (c) an execution of the CL
Algorithm~\ref{alg::ouralgpar} under a perfect communication scenario;
(d) an execution of the CL Algorithm~\ref{alg::ouralgpar} under a
measurement-dropout scenario. Note here that each of the CL filters
(c) and (d) has its own corresponding server node on the workstation.
Figure~\ref{fig::experiment} depicts the result of one of our
experiments.  In this experiment, to emulate the message dropout, we
partition our area as shown in Figure~\ref{fig::experiment} into four
regions and designate one of the areas, highlighted in gray, as the
message-dropout zone.  In the implementation that executes CL
Algorithm~\ref{alg::ouralgpar} under the message-dropout scenario
(CL filter (d)), the robot passing through the gray zone does not
implement the update-message it receives from the server.  In
Figure~\ref{fig::experiment}, the trajectory generated by the overhead
camera (the curve indicated by the black crosses) serves as our
reference trajectory. As seen, as times goes by the
position estimate generated by propagating the pose equations using
the wheel encoder measurements (the trajectory depicted by the dotted
curve) has large estimation error.  In Figure~\ref{fig::experiment},
the location estimate of the robots via the CL
Algorithm~\ref{alg::ouralgpar} under perfect-communication and
message-dropout scenarios are depicted, respectively by the solid red
curve and the blue dashed curve. As we can see, whenever a relative
measurement is obtained, the CL algorithms improve the location
accuracy of the robots. Of particular interest is the effect of CL
algorithm on the position accuracy of robots when they pass through
region $4$ (the shaded region on Figure~\ref{fig::experiment}). In our
scenario described above, no relative measurement is taken by or from
the robot in region $4$. However, because of maintained past
correlations among the robots through the server, in the case of the
perfect-communication scenario the robot in region $4$ still
benefits from the relative measurement updates generated by
measurements taken by other robots. 
  Of
course, in the message-dropout scenario (see the blue dashed line
trajectories) such benefit is lost because the robot in region $4$
does not receive the update message from the server. However, the
trajectories show the robustness of Algorithm~\ref{alg::ouralgpar} to
message dropout, i.e., the robots that receive the update message from
the server continue to improve their localization accuracy while the
robot in region $4$ is momentarily deprived from such
benefit. However, as soon as the latter reconnects and
receives an update message, its accuracy improves again.
\begin{figure}[t!]
  \unitlength=0.5in
  \psfrag*{x}[][cc][1.1]{\renewcommand{\arraystretch}{0.5}\begin{tabular}{c}$\ln(|x^i-\mathsf{x}^\star|)$\\~\end{tabular}}
  \psfrag*{A}[][cc][1.1]{\renewcommand{\arraystretch}{0.5}\begin{tabular}{c}Agents\\~\end{tabular}}
  \psfrag*{t}[][cc][1.1]{\renewcommand{\arraystretch}{2}\begin{tabular}{c}$t$\end{tabular}}
  \centering 
\subfloat[robot 1]{
   \includegraphics[trim=0 0 0 15,clip,height=1.55in]{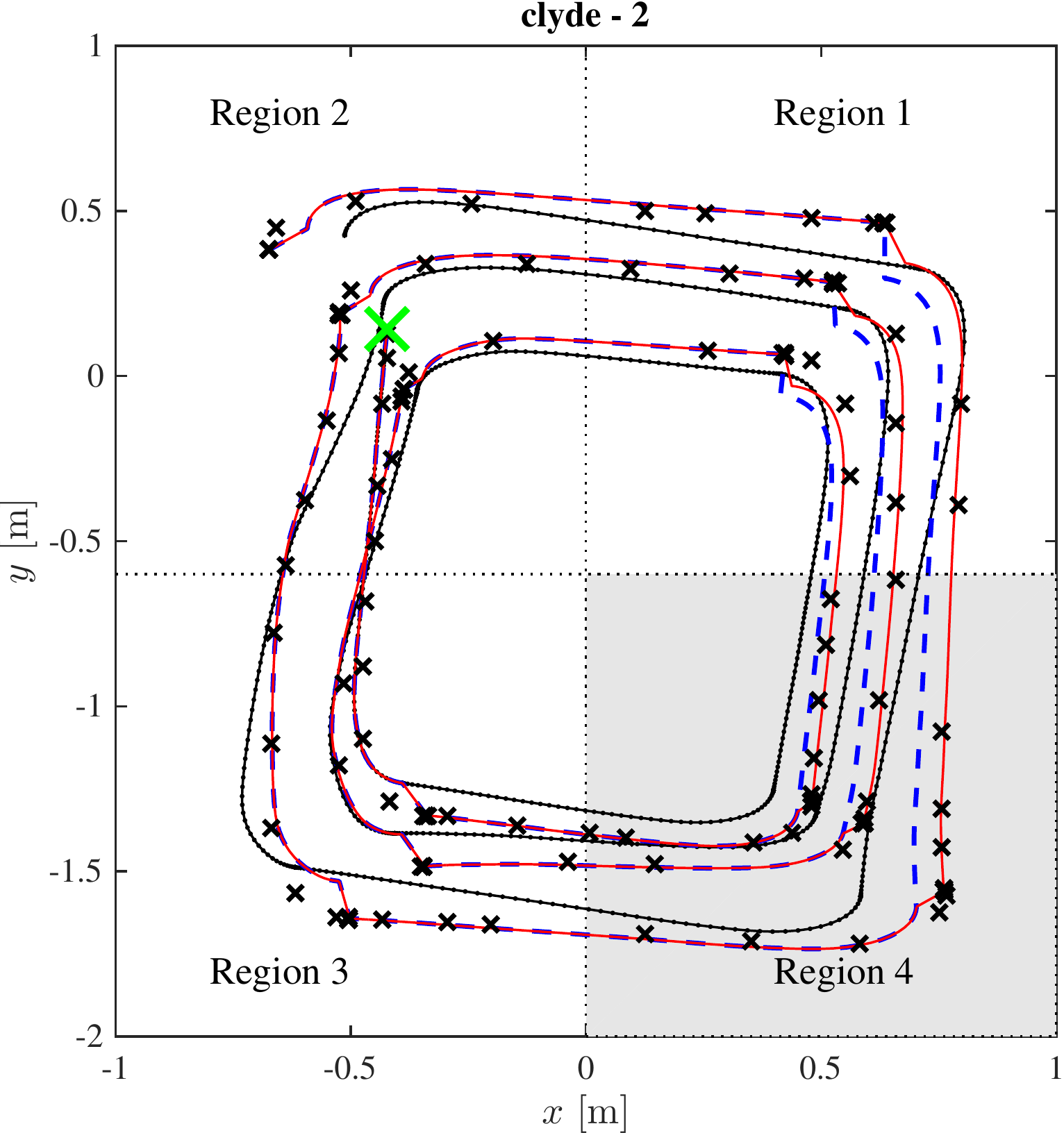}
  }
  \subfloat[robot 2]
  {
    \includegraphics[trim=0 0 0 15,clip,height=1.55in]{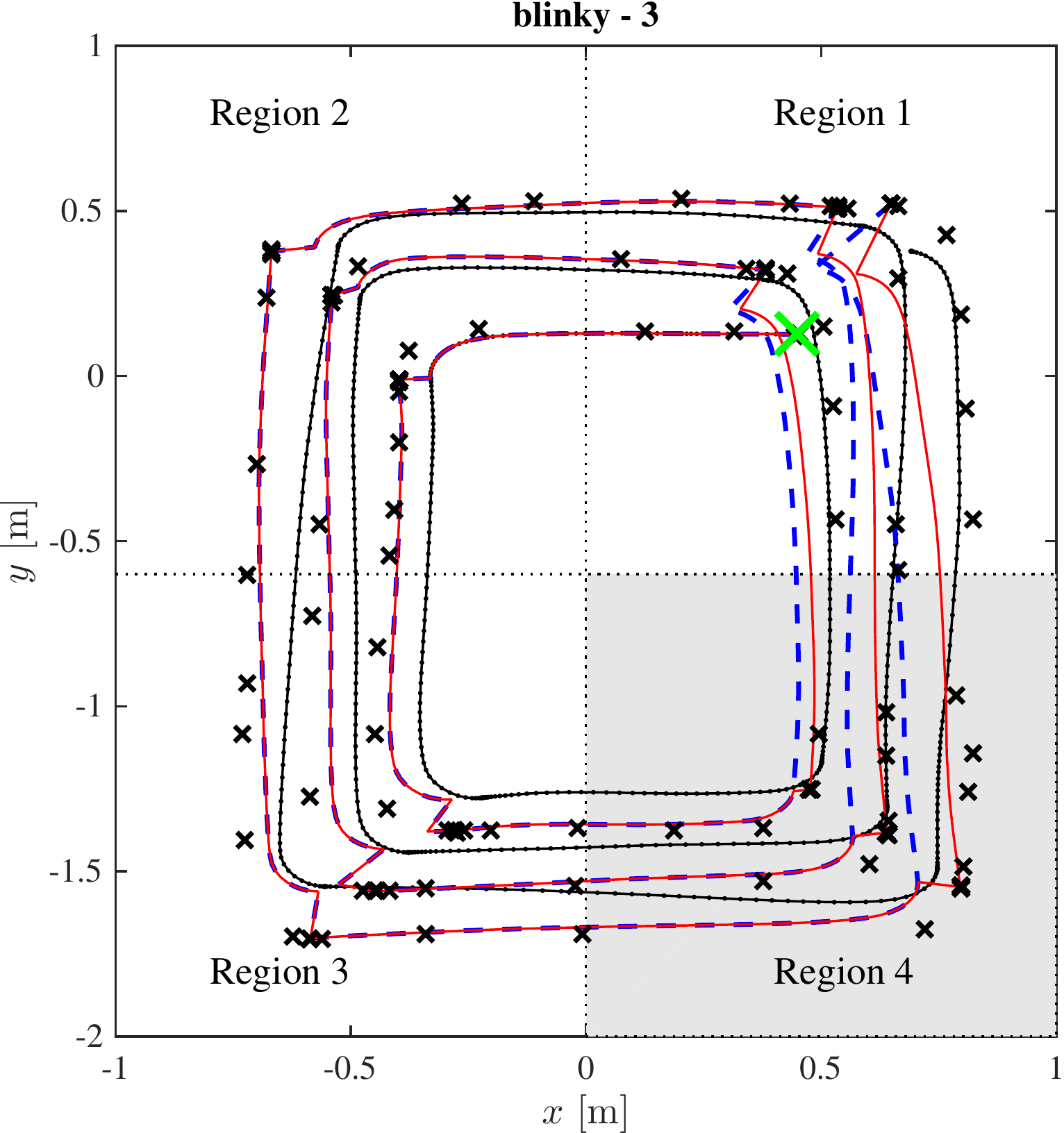}
  }\\
  \subfloat[robot 3]{
    \includegraphics[trim=0 0 0 15,clip,height=1.55in]{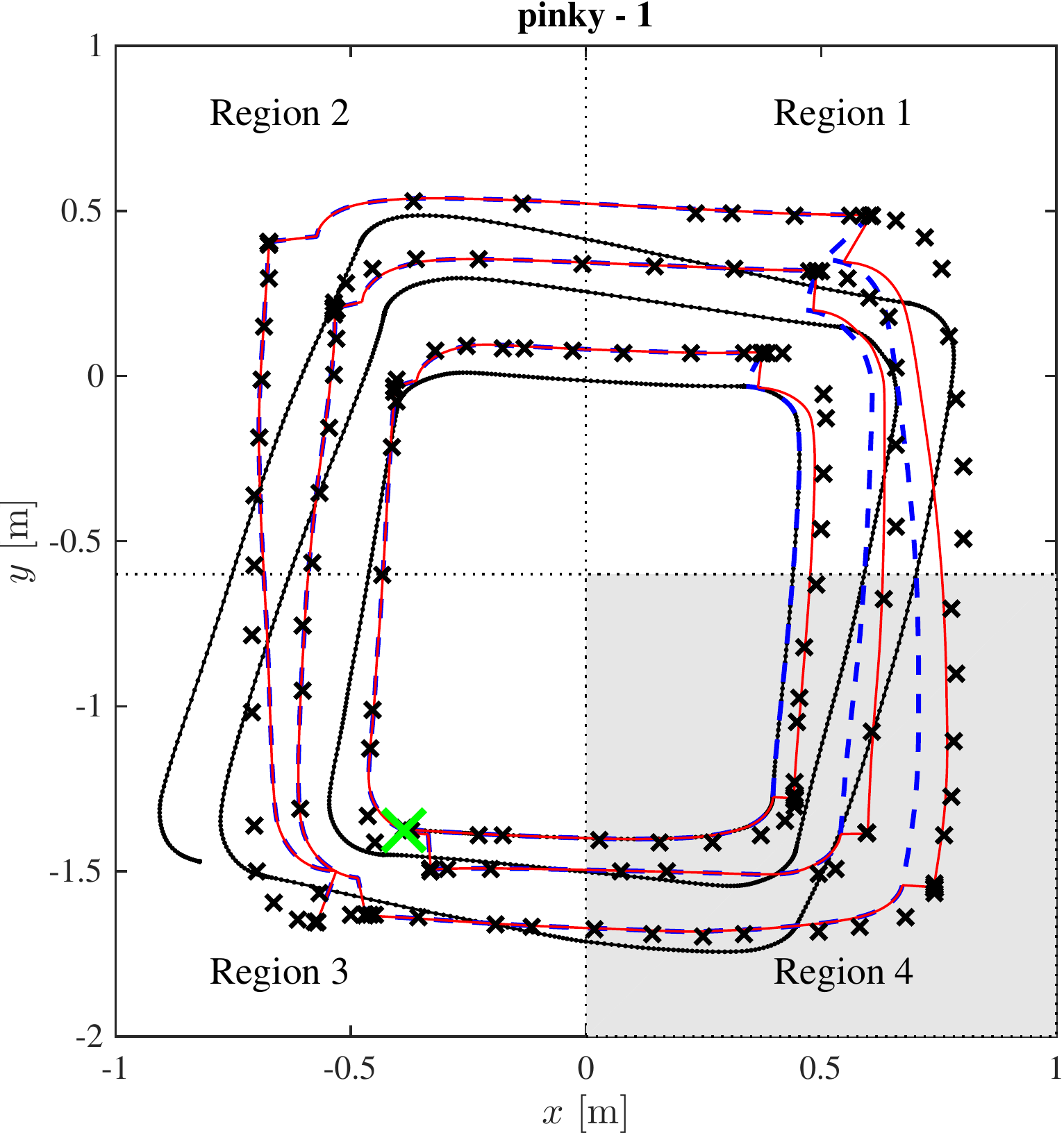}
  }
  \subfloat[robot 4]
  {
    \includegraphics[trim=0 0 0 15,clip,height=1.55in]{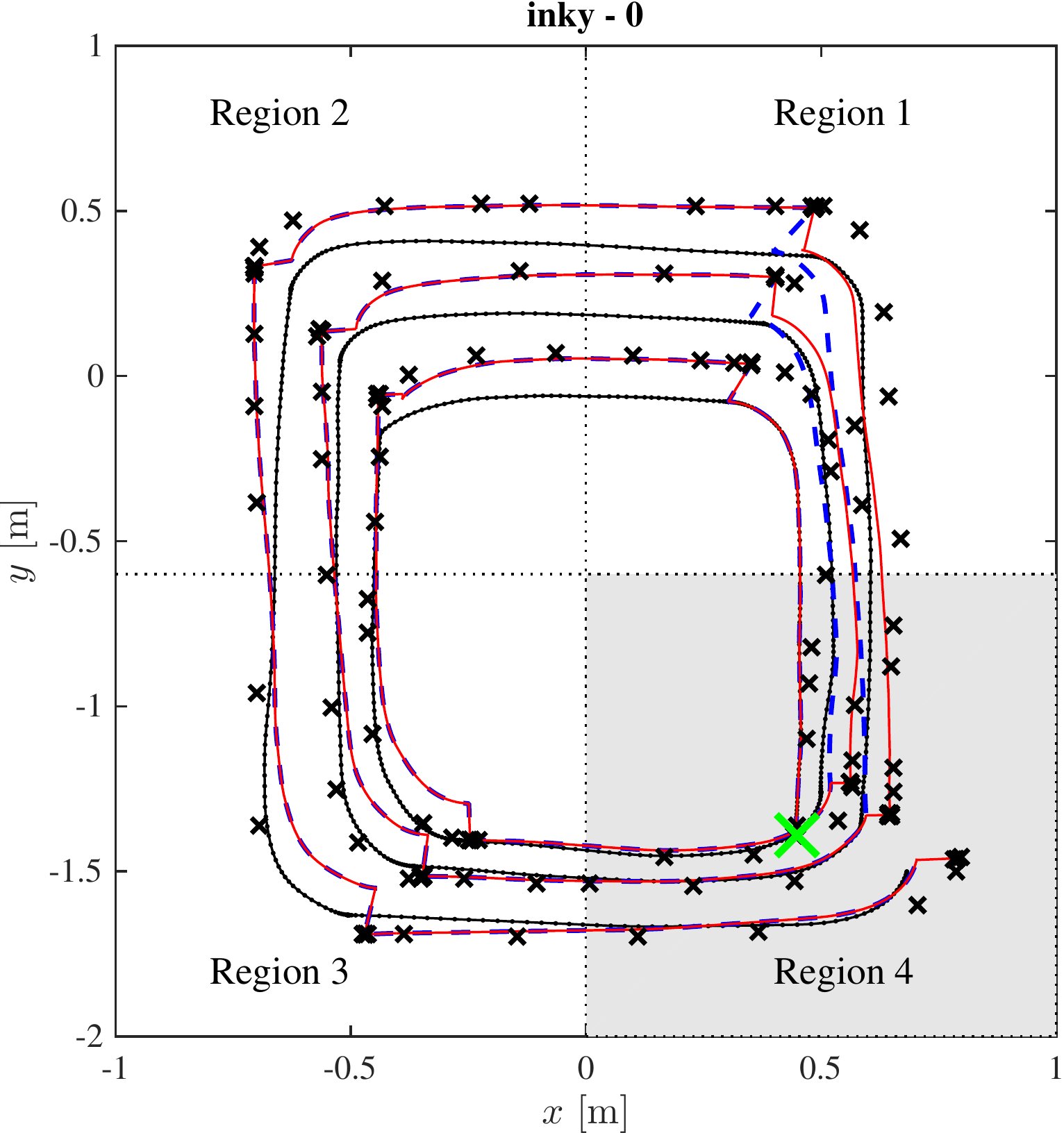}
  }
  \caption{{\small Trajectories of the robots under an experimental
      test generated by $4$ simultaneously running ROS packages, one
      for the overhead camera location tracking (the curve indicated
      by black crosses), one for the propagation only location
      estimate (the black dotted curve), and the other two to obtain
      location estimates by the 
      the~\PDsplitEKF~CL algorithm (Algorithm~\ref{alg::ouralgpar}) under perfect communication (red
      solid curve) and message-dropout (dashed blue curve)
      scenarios. Region $4$ which is highlighted in gray is the area
      where we emulate the message dropout.
}  }\label{fig::experiment}
\end{figure}

\section{Conclusions}\vspace{-0.08in}
\blue{For a team of robots with limited computational, storage and
communication resources, we proposed a server assisted distributed CL algorithm  which under the perfect communication scenarios renders the same 
localization performance as of a joint CL using EKF. In terms of the team size,
this algorithm only requires {$O(1)$} storage and computational cost per
robot and the main computational burden of implementing the EKF for CL
is carried out by the server. }We showed that this algorithm has robustness to occasional communication failure between robots and the server. 
Here, we discarded the measurement of the robots that fail to communicate with the server. 
Our future work involves utilizing these old measurements using
out-of-sequence-measurement update strategies~\cite{YBS-HC-MM:04} when
the communication link is restored between the corresponding robot and
the~server.



\vspace{-0.06in}
\bibliographystyle{ieeetr}%


\appendix[Proof of Theorems~\ref{thm::main} and~\ref{thm::partial_update} ]
\renewcommand{\theequation}{A.\arabic{equation}}
\renewcommand{\thethm}{A.\arabic{thm}}
\renewcommand{\thelem}{A.\arabic{lem}}
\renewcommand{\thedefn}{A.\arabic{defn}}

\begin{proof}[Proof of Theorem~\ref{thm::main}]
   Our proof is based on the
  mathematical induction over $k\in\nonnegativeinteger$.
Let $k=0$. Given~\eqref{eq::intermidate_var} and the defined
  initial conditions,~the right hand side of \eqref{eq::alternate-EKF-equations-a} results in
$\!\vect{\Phi}^i(1)\,\vect{\Pi}_{i,j}(0)\,
  \vect{\Phi}^j(1)\!^\top\!\!\!=\!\vect{F}^i(1)\,\vect{0}_{n^i\times
    n^j}\,\vect{F}^j(1)\!^\top\!\!\!=\!\vect{0}_{n^i\times n^j}
$,
which matches exactly the value~\eqref{eq::propag_central_Expanded-c} gives for $\vect{P}_{i,j}^{\prpg}(1)$. Next, we validate~\eqref{eq::alternative-EKF-update-xi-Pi} at $k=0$. When there is no relative
measurement at the first step, because of~\eqref{eq::barD-no-meas}, \eqref{eq::alternate-EKF-equations-b} and \eqref{eq::alternate-EKF-equations-c} give respectively $\Hvect{x}^{i\updt}(1)=\Hvect{x}^{i\prpg}(1)$ and $\vect{P}^{i\updt}(1)=\vect{P}^{i\prpg}(1)$ which match exactly what~\eqref{eq::RobotCovarUpdate-a} and \eqref{eq::RobotCovarUpdate-b} provide. Given~\eqref{eq::barD-i}-\eqref{eq::barD-b} and \eqref{eq::Pi}, the right hand side of~\eqref{eq::alternate-EKF-equations-d} reads as
$\vect{\Phi}^i(1)\,\vect{\Pi}_{i,j}(1)
\,\vect{\Phi}^j(1)^\top=\vect{\Phi}^i(1)\,\vect{\Pi}_{i,j}(0)
\,\vect{\Phi}^j(1)^\top=\vect{0}_{n^i\times n^j}$ which matches exactly the value 
~\eqref{eq::RobotCovarUpdate-c} gives for $\vect{P}_{i,j}^{\updt}$. 
On the other hand, when there
is a relative measurement $a\xrightarrow{1}b$, validity of~\eqref{eq::alternative-EKF-update-xi-Pi} follows from followings.
Using~\eqref{eq::barD-i}-\eqref{eq::barD-b}, we obtain
$\vect{\Gamma}_i(1)=(\vect{\Pi}_{i,b}(0){\vect{\Phi}^b(1)}^\top\Tvect{H}_{b}^\top\!+
  \!\vect{\Pi}_{i,a}(0){\vect{\Phi}^a(1)}^\top\Tvect{H}_{a}^\top)=\vect{0},~ i\in\VV\backslash\{a,b\}$,
$\vect{\Phi}^a(1)\vect{\Gamma}_{a}(1)=   \vect{F}^a(1)\vect{F}^a(1)^{-1}\vect{P}^{a\prpg}(1)\Tvect{H}_{a}(1)^\top
  \,{\vect{S}_{a,b}}(1)^{-\frac{1}{2}}=\vect{K}_a(1){\vect{S}_{a,b}}(1)^{\frac{1}{2}}$ and $\vect{\Phi}^b(1)\vect{\Gamma}_{b}(1)=
  \vect{F}^{b}(1) \vect{F}^{b}(1)^{-1}\vect{P}^{b\prpg}(1)\Tvect{H}_{b}(1)^\top\!\,{\vect{S}_{a,b}}(1)^{-\frac{1}{2}}=\vect{K}_b(1)\vect{S}_{a,b}^{\frac{1}{2}}.$ Moreover, $\vect{\Pi}_{i,j}(1)=\vect{\Pi}_{i,j}(0)+\vect{\Gamma}_i(1)\,\vect{\Gamma}_j(1)=\vect{0}$,~ $i\in\VV\backslash\{a,b\}$, $j\in\VV\backslash\{i,a,b\}$. Here, we used $\vect{\Phi}^{a}(1)=\vect{F}^{a}(1)$, $\vect{\Phi}^{b}(1)=\vect{F}^{b}(1)$, $\vect{\Pi}_{i,j}(0)=\vect{0}$ and $\vect{P}^{\prpg}_{i,j}(0)=\vect{0}$ for $i\in\VV$, $j\in\VV\backslash\{i\}$. Therefore,~\eqref{eq::alternate-EKF-equations-d} gives $\vect{P}_{i,j}^{\updt}(1)=\vect{0}$, $i\in\VV\setminus\{a,b\},
  ~j\in\VV\backslash\{i,a,b\}$, and 
  \begin{align*}
  &\vect{P}_{a,b}^{\updt}(1)=\!\vect{P}_{b,a}^{\updt}(1)^\top\!=\!\vect{\Phi}^a(1)(\vect{\Pi}_{a,b}(0)\!-\!\vect{\Gamma}_a(1)\,
  \vect{\Gamma}_b(1)^\top)  \vect{\Phi}^b(1)^\top\\
& ~=-\vect{\Phi}^a(1)\vect{\Gamma}_a(1)\,
  \vect{\Gamma}_b(1)^\top \vect{\Phi}^b(1)^\top=-\vect{K}_a(1)\vect{S}_{a,b}(1)\vect{K}_b(1)^\top,
  \end{align*}
 which exactly matches~\eqref{eq::RobotCovarUpdate-c} as shown below
 (recall~\eqref{eq::K_gain_robotwise}). First note that,
 \eqref{eq::RobotCovarUpdate-c} reduces to $
   \vect{P}_{i,j}^{\updt}(1) =\vect{0}$ for $i\in\VV\setminus\{a,b\}$ and $j\in\VV\backslash\{i,a,b\}$. We also obtain $
   \vect{P}_{a,b}^{\updt}(1)=\vect{P}_{b,a}^{\updt}(1)^\top=\vect{P}_{a,b}^{\prpg}
   \!(1)-\vect{K}_a(1)\vect{S}_{a,b}(1)\vect{K}_b(1)^\top=-\vect{K}_a(1)\vect{S}_{a,b}(1)\vect{K}_b(1)^\top$.
   
Assume now that the theorem statement holds for $k$. Then at time step
$k+1$, we have $\vect{\Phi}^i(k+1)\vect{\Pi}_{i,j}(k)\vect{\Phi}^j(k+1)^\top=\vect{F}^i(k+1)\vect{\Phi}^i(k)\vect{\Pi}_{i,j}(k)\vect{\Phi}^j(k)^\top\vect{F}^j(k+1)^\top=\vect{F}^i(k+1)\vect{P}_{i,j}^{\updt}(k)\vect{F}^j(k+1)^\top= \vect{P}_{i,j}^{\prpg}(k+1)$, 
 which confirms validity of \eqref{eq::alternate-EKF-equations-a} at
 $k+1$. Next, we show~\eqref{eq::alternate-EKF-equations-d} is correct. When there
 is no relative measurement at $k+1$, using~\eqref{eq::barD-i}-\eqref{eq::barD-b} and~\eqref{eq::Pi} 
 we can write  $\vect{\Phi}^i(k+1)\vect{\Pi}_{i,j}(k+1)\vect{\Phi}^j(k+1)^\top
=\vect{\Phi}^i(k+1)\vect{\Pi}_{i,j}(k)\vect{\Phi}^j(k+1)^\top$, which confirms the correct outcome of 
$\vect{P}_{i,j}^{\updt}(k+1)
   =\vect{P}_{i,j}^{\prpg}(k+1)$ holds at $k+1$.
Next, we
 evaluate~\eqref{eq::alternate-EKF-equations-d} when robot $a$ takes a
 relative measurement from robot $b$ at $k+1$.  First, notice that we can always write
 {\begin{align}\label{eq::K_alt}
     \vect{K}_i(k+1)=\vect{\Phi}^i(k+1)\vect{\Gamma}_i(k+1)\vect{S}_{a,b}^{-\frac{1}{2}},\quad
     i\in\VV,
\end{align}}
becuase
\begin{itemize}
\item for $i\in\VV\backslash\{a,b\}$ (recall~\eqref{eq::barD-i},
  \eqref{eq::alternate-EKF-equations-a},) we have
     $\vect{\Phi}^i(k\!+\!1)\vect{\Gamma}_i(k\!+\!1)\vect{S}_{a,b}^{-\frac{1}{2}}\!=\!
      \vect{\Phi}^i(k\!+\!1)(\vect{\Pi}_{i,b}(k){\vect{\Phi}^b(k\!+\!1)}\!^\top\Tvect{H}_{b}\!^\top\!
      \!\!+\!\vect{\Pi}_{i,a}(k){\vect{\Phi}^a(k\!+\!1)}^\top\Tvect{H}_{a}\!^\top){\vect{S}_{a,b}}\!\!^{-1}=(\vect{P}_{i,b}^{\prpg}(k\!+\!1)\Tvect{H}_{b}^\top+\vect{P}_{i,a}^{\prpg}(k\!+\!1)
      \Tvect{H}_{a}^\top){\vect{S}_{a,b}}\!\!^{-1}\!=\!\vect{K}_i(k\!+\!1)$;
\item for $i=a$ (recall~\eqref{eq::barD-a},
  \eqref{eq::alternate-EKF-equations-a},) we have $\vect{\Phi}^i(k+1)\vect{\Gamma}_i(k+1)\vect{S}_{a,b}^{-\frac{1}{2}}=
    \vect{\Phi}^a(k+1)\big(\vect{\Pi}_{a,b}(k){\vect{\Phi}^b\!(k\!+\!1)}\!^\top\Tvect{H}_{b}^\top\!
    \!\!+\!\vect{\Phi}^a\!(k\!+\!1)^{-1}\vect{P}^{a\prpg}\!(k\!+\!1)\Tvect{H}_{a}\!\!^\top\big){\vect{S}_{a,b}}\!\!^{-1}=\!(\vect{P}_{a,b}^{\prpg}(k+1)\Tvect{H}_{b}\!^\top\!+\!\vect{P}^{a\prpg}(k\!+\!1)
    \Tvect{H}_{a}^\top){\vect{S}_{a,b}}\!\!^{-1}\!\!=\!\vect{K}_a(k\!+\!1)$;
\item for $i=b$ (recall~\eqref{eq::barD-b},
  \eqref{eq::alternate-EKF-equations-a},) we have $\vect{\Phi}^i(k+1)\vect{\Gamma}_i(k+1)\vect{S}_{a,b}^{-\frac{1}{2}}
  = \vect{\Phi}^b(k+1) \big(\vect{\Phi}^{b}(k+1)^{-1}\vect{P}^{b\prpg}(k+1)
  \Tvect{H}_{b}^\top+\!\vect{\Pi}_{b,a}(k)
  {\vect{\Phi}^a(k+1)}^\top\Tvect{H}_{a}^\top\big)\,{\vect{S}_{a,b}}\!\!^{-1}=(\vect{P}^{b\prpg}(k+1)\Tvect{H}_{b}^\top\!+\!\vect{P}^{\prpg}_{b,a}(k)
  \Tvect{H}_{a}^\top)\,{\vect{S}_{a,b}}\!\!^{-1}\!=\!\vect{K}_b(k+1)$.
\end{itemize}
Therefore, by recalling~\eqref{eq::barD-i}-\eqref{eq::barD-b} and~\eqref{eq::Pi}, we can write $
  \vect{\Phi}^i(k+1)\vect{\Pi}_{i,j}(k+1)
  \vect{\Phi}^j(k+1)^\top=\vect{\Phi}^i(k+1)\vect{\Pi}_{i,j}(k)\vect{\Phi}^j(k+1)^\top-\vect{\Phi}^i(k+1)\vect{\Gamma}_i(k+1)\, \vect{\Gamma}_j(k+1)^\top\vect{\Phi}^j(k+1)^\top=\vect{P}_{i,j}^{\prpg}(k+1)-\big(\vect{\Phi}^i(k+1)\vect{\Gamma}_i(k+1)\,
  \vect{S}_{a,b}^{-\frac{1}{2}}\big)\vect{S}_{a,b}\,
  \big(\vect{\Phi}^j(k+1)\vect{\Gamma}_j(k+1)
  \vect{S}_{a,b}^{-\frac{1}{2}}\big)^\top=\vect{P}_{i,j}^{\prpg}(k+1)-\vect{K}_i(k+1)\vect{S}_{a,b}
  \vect{K}_j(k+1)^\top= \vect{P}_{i,j}^{\updt}(k+1)$,
 which confirms validity of \eqref{eq::alternate-EKF-equations-d} at
 $k+1$ when robot $a$ takes relative measurement from robot $b$. This
 completes the proof of validity
 of~\eqref{eq::alternate-EKF-equations-d} for all
 $k\in\nonnegativeinteger$. Subsequently,~\eqref{eq::alternate-EKF-equations-b} and~\eqref{eq::alternate-EKF-equations-c}
 follow, in a straightforward manner, from~\eqref{eq::K_alt} now
 being valid for all $k\in\nonnegativeinteger$.
\end{proof}

\begin{proof}[Proof of Theorem~\ref{thm::partial_update}]         
  We can obtain Kalman gain $\vect{K}_{1:m}$ that minimizes  $\text{Trace}(\vect{P}^{\updt}(k+1))$ from $\partial \text{Trace}(\vect{P}^{\updt}(k+1))
  /\partial \vect{K}_{1:m}=\vect{0}$.  Let
  {$\Hvect{x}_{1:m}^{\updt}=(\Hvect{x}^{1\updt},\cdots,\Hvect{x}^{m\updt})$},
  {$\Hvect{x}_{m+1:N}^{\updt}=(\Hvect{x}^{m+1\updt},\cdots,\Hvect{x}^{N\updt})$}. Next,
  we obtain
  $\text{Trace}(\vect{P}^{\updt}(k+1))$. Given~\eqref{eq::update-cent-miss},
we have\begin{align*}
&
\begin{bmatrix}\begin{smallmatrix}
  \vect{x}_{1:m}(k+1)-\vect{x}_{1:m}^{\updt}(k+1)\\
  \vect{x}_{m+1:N}(k+1)-\vect{x}_{m+1:N}^{\updt}(k+1)\end{smallmatrix}
\end{bmatrix}\approx \begin{bmatrix}\begin{smallmatrix}
\vect{e}_{1:m}^{\updt}(k+1)\\
\vect{e}_{m+1:N}^{\updt}(k+1)\end{smallmatrix}
\end{bmatrix}=\!\nonumber\\
  &
\begin{bmatrix}\begin{smallmatrix}(\vect{I}_m-\!\vect{K}_{1:m}\Bvect{H})&\vect{0}\\
  \vect{0}&\vect{I}_{N-m}\end{smallmatrix}\end{bmatrix}\!\!\!\begin{bmatrix}\begin{smallmatrix}
  \vect{x}_{1:m}(k\!+\!1)-\vect{x}_{1:m}^{\prpg}(k\!+\!1)\\
  \vect{x}_{m+1:N}(k\!+\!1)-\vect{x}_{m+1:N}^{\prpg}(k\!+\!1)\end{smallmatrix}\end{bmatrix}\\
  &~\qquad\qquad\quad\qquad+\begin{bmatrix}\begin{smallmatrix}
  -\vect{K}_{1:m}&\vect{0}\\
  \vect{0}&\vect{0} \end{smallmatrix}\end{bmatrix}\!\!\!  \begin{bmatrix}\begin{smallmatrix}
  \vect{\nu}^a(k+1)\\
  \vect{0}\end{smallmatrix} \end{bmatrix}
\!\!, \end{align*} where 
$\Bvect{H}= \big[\begin{smallmatrix}
  \overset{1}{\vect{0}}~~\overset{\cdots}{\cdots}~~\overset{a}{\Tvect{H}_a}(k+1)~~\overset{a+1}{\vect{0}}~~\overset{\cdots}{\cdots}~~\overset{b}{\Tvect{H}_b}(k+1)~~\overset{b+1}{\vect{0}}~~\overset{\cdots}{\cdots}\overset{m}{\vect{0}} \end{smallmatrix}\big]$.
Recall that
$\vect{P}^{\updt}(k+1)\!=\!\text{E}[\vect{e}^{\updt}(k+1)\vect{e}^{\updt}(k+1)^\top]$
which is equal~to
\begin{align}\label{eq::partial_covariance_update}
  & \vect{P}^{\updt}(k+1)=\! \\
  & \begin{smallmatrix}\left[\begin{array}{c|c}
      \vect{P}^{\updt}_{1:m,1:m}(k+1)&\vect{P}^{\updt}_{1:m,m+1:N}(k+1)\\\hline
      \vect{P}^{\updt}_{1:m,m+1:N}(k+1)^\top&\vect{P}^{\updt}_{m+1:N,m+1:N}(k+1)
    \end{array}\right]\!\end{smallmatrix}=\nonumber\\
    &\!\begin{bmatrix}\begin{smallmatrix}
    \vect{K}_{1:m}\vect{R}_a\vect{K}_{1:m}^\top&\vect{0}_{m\times
      (N-m)}\\
    \vect{0}_{(N-m)\times m}&\vect{0}_{(N-m)\times(N-m)}\end{smallmatrix}\end{bmatrix}\!+\!\begin{bmatrix}\begin{smallmatrix}(\vect{I}_m\!-\!\vect{K}_{1:m}\Bvect{H})\!\!\!\!\!&\vect{0}\\
    \vect{0}&\!\!\!\vect{I}_{N-m}\end{smallmatrix}\end{bmatrix}\nonumber\\
  &\!\!\begin{smallmatrix}\times\left[
    \begin{array}{c|c}\vect{P}^{\prpg}_{1:m,1:m}(k+1)&\vect{P}^{\prpg}_{1:m,m+1:N}(k+1)\\\hline
      \vect{P}^{\prpg}_{m+1:N,1:m}(k+1)&\vect{P}^{\prpg}_{m+1:N,m+1:N}(k+1)
    \end{array}\right]\end{smallmatrix}\times\nonumber\\&\!\!\begin{bmatrix}\begin{smallmatrix}
    (\vect{I}_m-\vect{K}_{1:m}\Bvect{H})^\top&\vect{0}\\
    \vect{0}&\vect{I}_{N-m}\end{smallmatrix}\end{bmatrix}\!\!.\nonumber
\end{align}
Then, we have
$ \tr{\vect{P}^{\updt}(k+1)}=\tr{\vect{P}^{\prpg}_{1:m,1:m}(k+1)}\!-2\,
  \tr{\vect{K}_{1:m}\Bvect{H}\vect{P}^{\prpg}_{1:m,1:m}(k+1)} +
  \tr{\vect{K}_{1:m}(\vect{R}_a+\Bvect{H}\vect{P}^{\prpg}_{1:m,1:m}(k+1)
    \Bvect{H}^\top)\vect{K}_{1:m}^\top}+
    \tr{\vect{P}^{\prpg}_{m+1:N,m+1:N}(k+1)}$.
As a result, we have $
  \partial \tr{\vect{P}^{\updt}(k+1)} /\partial \vect{K}_{1:m}= -2
  \,\vect{P}^{\prpg}_{1:m,1:m}(k+1)\Bvect{H}^\top+
  2\,(\vect{R}_a+\Bvect{H}
  \vect{P}^{\prpg}_{1:m,1:m}(k+1)\Bvect{H}^\top)\vect{K}_{1:m}^\top
  =-2
  \,\vect{P}^{\prpg}_{1:m,1:m}(k+1)\Bvect{H}^\top+2\,\vect{S}_{a,b}\vect{K}_{1:m}^\top$.
Therefore, the gain {$\vect{K}_{1:m}$} that minimizes $\tr{\vect{P}^{\updt}_{1:m}(k+1)}$ is {$\vect{K}_{1:m}=
  \Bvect{H}\vect{P}^{\prpg}_{1:m,1:m}(k+1)\vect{S}_{a,b}^{-1}$}, which
equivalently expands in robot-wise components to give
us~\eqref{eq::gain_partial_update}. For the covariance update,
from~\eqref{eq::partial_covariance_update}, we obtain 
\begin{subequations}\label{eq::covar-miss-up-manimp}
\begin{align}
  &\vect{P}^{\updt}_{1:m,1:m}(k+1)=\,(\vect{I}_m-\vect{K}_{1:m}\Bvect{H})
 \times\label{eq::covar-miss-up-manimp-11}\\
  &\qquad \vect{P}^{\prpg}_{1:m,1:m}(k+1)(\vect{I}_m-\vect{K}_{1:m}\Bvect{H})^\top+\vect{K}_{1:m}\vect{R}_a\vect{K}_{1:m}^\top\nonumber\\
  &\quad\quad\quad\quad\quad\quad~~\,=\,
  \vect{P}^{\prpg}_{1:m,1:m}(k+1)-\vect{K}_{1:m}\vect{S}_{a,b}\vect{K}_{1:m}^\top,\nonumber\\
  &\vect{P}^{\updt}_{m+1:N,m+1:N}(k+1)=\,\vect{P}^{\prpg}_{m+1:N,m+1:N}(k+1),\label{eq::covar-miss-up-manimp-22}\\
  &\vect{P}^{\updt}_{1:m,m+1:N}(k\!+\!1)\!=\!(\vect{I}_m\!-\!\vect{K}_{1:m}
  \Bvect{H})\times\\
 & \vect{P}^{\prpg}_{1:m,m+1:N}(k\!+\!1)\!=\!
  \Big(\vect{I}_m\!\!-\!
  \begin{bmatrix}\begin{smallmatrix}
    \vect{K}_{1}\\\vdots\\\vect{K}_{m}
  \end{smallmatrix}\end{bmatrix}\!\!\!
\Big[
    \overset{1}{\vect{0}}~\,\overset{\cdots}{\cdots}~\,\overset{a}{\Tvect{H}_a}(k)~\,\overset{a+1}{\vect{0}}~\nonumber\\
    &\qquad\quad\,\overset{\cdots}{\cdots}~\,\overset{b}{\Tvect{H}_b}(k)~\,\overset{b+1}{\vect{0}}~\,\overset{\cdots}{\cdots}\overset{m}{\vect{0}}
\Big]
\!\Big)\vect{P}^{\prpg}_{1:m,m+1:N}(k+1)=\nonumber\\
&
\Big(\!\vect{I}_m\!-\!\begin{bmatrix}\begin{smallmatrix}\vect{K}_{1}
  \vect{S}_{a,b}\vect{S}_{a,b}^{-1}\\\vdots\\\vect{K}_{m}\vect{S}_{a,b}\vect{S}_{a,b}^{-1}
\end{smallmatrix}\end{bmatrix}\Big[\overset{1}{\vect{0}}~\,\overset{\cdots}{\cdots}~\,\overset{a}{\Tvect{H}_a}(k)\nonumber\\
&
\qquad\quad~\,\overset{a+1}{\vect{0}}~\,\overset{\cdots}{\cdots}~\,\overset{b}{\Tvect{H}_b}(k)~\,\overset{b+1}{\vect{0}}~\,\overset{\cdots}{\cdots}\overset{m}{\vect{0}}
\Big]\!\Big)\vect{P}^{\prpg}_{1:m,m+1:N}(k+1),\nonumber
\end{align}
\end{subequations}
where
\begin{align*}
\vect{P}^{\prpg}_{1:m,m+1:N}(k\!+\!1)\!=\!
\begin{bmatrix}\begin{smallmatrix}\vect{P}^{\prpg}_{1,m+1}(k\!+\!1)\!&\cdots&\!\vect{P}^{\prpg}_{1,N}(k\!+\!1)\\
\vdots&\cdots&\vdots\\
\vect{P}^{\prpg}_{m,m+1}(k\!+\!1)\!&\cdots&\!\vect{P}^{\prpg}_{m,N}(k\!+\!1)
\end{smallmatrix} \end{bmatrix}
\end{align*}
Recalling the definition of the pseudo-gains~\eqref{eq::pseudo-gain}, then~\eqref{eq::covar-miss-up-manimp} results in~\eqref{eq::robot-covar-updt-miss} and~\eqref{eq::robot-cross-covar-updt-miss}.
\end{proof}

\clearpage
\appendix[Sequential updating for multiple measurements]
\renewcommand{\theequation}{B.\arabic{equation}}
\renewcommand{\thethm}{B.\arabic{thm}}
\renewcommand{\thelem}{B.\arabic{lem}}
\renewcommand{\thedefn}{B.\arabic{defn}}
For multiple synchronized measurements, we use the
sequential updating procedure.  Let $\VV_{\text{A}}(k)$ denote the set of the robots that have made an exteroceptive measurement at time $k$, $\VV_\text{\text{B}}^i(k)$ denote the landmark robots of robot $i\in\VV_{\text{A}}(k)$, and $\VV_{\text{A,B}}(k)$ represent the set of all landmark robots and the robots that have taken relative measurements. Then the total number of relative measurements is   $n_s=\sum\nolimits_{i=1}^{|\VV_{\text{A}}(k)|}|\VV_\text{\text{B}}^i(k)|$.
    Recall that in sequential updating, the measurements are processed one by one, starting with using the first measurement to update the predicted estimate and error covaraince matrix, and proceeding with next measurement to update the current updated state estimate and error measurements. By straightforward substitution, the sequential updating procedure in \splitEKF~CL variables, starting with 
     $\Hvect{x}^{i\updt}(k+1,0)=\Hvect{x}^{i\prpg}(k+1)$,
  $\vect{P}^{i\updt}(k+1,0)=\vect{P}^{i\prpg}(k+1)$, $i\in\VV$, and
  $\vect{P}_{i,l}^{\updt}(k+1,0)=\vect{P}_{i,l}^{i\prpg}(k+1)$ for
  $l\in\VV\backslash\{i\}$, reads as  (starting at $j=1$),
 \begin{align*}
    &\text{for}~ a\in\VV_{\text{A}}(k+1),\nonumber
   \\
   &\quad\text{for}~ b\in\VV^a_{\text{B}}(k+1),\quad\quad\quad\nonumber\\
   &\!\Hvect{x}^{i\updt}(k\!+\!1,j)\!=\Hvect{x}^{i\prpg}(k\!+\!1,j-1)+\\
   &\quad\quad\qquad\qquad\qquad\quad\vect{\Phi}^{i}(k\!+\!1)\vect{\Gamma}_i(k\!+\!1,j)\Bvect{r}^{a}(k\!+\!1,j),\nonumber\\
   &\!\vect{P}^{i\updt}(k\!+\!1,j)\!=\vect{P}^{i\prpg}\!(k\!+\!1,j-1)-\\
   &\qquad\quad\quad\quad\vect{\Phi}^{i}(k+1) \vect{\Gamma}_{i}(k+1,j)\vect{\Gamma}_i\!^\top(k+1,j)\vect{\Phi}^i(k+1)\!^\top\!,\nonumber\\
   &\!\vect{P}_{i,l}^{\updt}(k\!+\!1,j)\!=\vect{\Phi}^i(k\!+\!1)\,\vect{\Pi}_{i,l}(k\!+\!1,j-1)\,
   \vect{\Phi}^l(k\!+\!1)\!^\top\!\!,\\
      &\quad~~~ j\leftarrow j+1,\nonumber
\end{align*}
where $\vect{\Pi}_{i,j}(k+1,0)=\vect{\Pi}_{i,j}(k)$,
$\vect{\Pi}_{i,j}(k+1,j)=\vect{\Pi}_{i,j}(k+1,j-1)+\vect{\Gamma}_i(k+1,j)\,
\vect{\Gamma}_j(k+1,j)^\top$. Here, $\vect{\Gamma}_i(k+1,j)$ is
calculated from~\eqref{eq::barD-i}-\eqref{eq::barD-b} wherein
$\vect{S}_{a,b}$ at each $j$ is calculated from~\eqref{eq::Sab_DCL}
using {$\Hvect{x}^{\prpg}(k+1)=\Hvect{x}^{\updt}(k+1,j-1)$} and
{$\vect{P}^{\prpg}(k+1)=\vect{P}^{\updt}(k+1,j-1)$}.  Consequently,
$\Bvect{r}^a(k+1,j)=\vect{S}_{a,b}(k+1,j)^{-\frac{1}{2}}\vect{r}^{a}(k\!+\!1,j)$. 
 The update at time {$k+1$} is
  $\Hvect{x}^{i\updt}(k+1)=\Hvect{x}^{i\updt}(k+1,n_s)$,
  $\vect{P}^{i\updt}(k+1)=\vect{P}^{i\updt}(k+1,n_s)$, and
  $\vect{P}_{i,l}^{\updt}(k+1)=\vect{P}_{i,l}^{\updt}(k+1,n_s)$,
  $i\in\VV$, $l\in\VV\backslash\{i\}$.

Notice
that we can represent the final updated variables as
\begin{subequations}\label{eq::sequ-updt-DCL}
\begin{align}
&\Hvect{x}^{i\updt}(k+1,n_s)=\Hvect{x}^{i\prpg}(k\!+\!1,0)+\\
&\quad\quad\vect{\Phi}^{i}(k\!+\!1)\sum\nolimits_{j=1}^{n_s}\vect{\Gamma}_i(k\!+\!1,j)\Bvect{r}^{a}(k\!+\!1,j),\nonumber\\
&\vect{P}^{i\updt}(k+1,n_s)=\vect{P}^{i\prpg}\!(k\!+\!1,0)-\\
&\quad\quad\!\vect{\Phi}^{i}(k\!+\!1) \Big(\sum\nolimits_{j=1}^{n_s}\vect{\Gamma}_{i}(k\!+\!1,j)\vect{\Gamma}_i^\top(k\!+\!1,j)\Big)\vect{\Phi}^i(k\!+\!1)\!^\top\!,\nonumber\\
&\vect{P}_{i,l}^{\updt}(k\!+\!1,n_s)=\\
&\quad\quad\vect{\Phi}^i(k+1)\Big(\sum\nolimits_{j=1}^{n_s}\,\vect{\Pi}_{i,l}(k+1,j-1)\, \Big)\vect{\Phi}^l(k+1)^\top.\nonumber
\end{align}
\end{subequations}
\setlength{\textfloatsep}{5pt}
\begin{algorithm}[!t]
{\scriptsize
\caption{{ Server's sequential updating procedure for multiple in-network measurement at time $k+1$}}
\label{alg::CCU_seqential}
\begin{algorithmic}[1]
\Require
 Initialization ($j=0$): server
obtains  the following information from each robot $a\in\VV_{\text{A}}(k+1)$ and all of its landmarks $b\in\VV_{\text{B}}^a(k+1)$, 
\begin{align*}
&\lmssg^a=\Big(\vect{z}_{a,b},\Hvect{x}^{a\prpg}(k+1), \vect{P}^{b\prpg}(k+1), \vect{\Phi}^{a}(k+1)\Big),\nonumber\\
&\lmssg_a^b=\Big(\Hvect{x}^{b\prpg}(k+1),\vect{P}^{b\prpg}(k+1), \vect{\Phi}^{b}(k+1)\Big).
\end{align*} 
The server initializes the following variables 
 \begin{align*}
    \Hvect{x}^{\updt i}(k\!+\!1,0)&=\Hvect{x}^{\prpg i}(k\!+\!1),~ \Hvect{P}^{\updt i}(k\!+\!1,0)=\vect{P}^{\prpg i}(k\!+\!1),~ \forall i\in\bar{\VV}(k\!+\!1),\\
    \vect{\Pi}_{i,l}(k\!+\!1,0)&=\vect{\Pi}_{i,l}(k), i\in\VV\backslash\{N\},~l\in\{i\!+\!1,\cdots,N\}.
    \end{align*}

\hspace{-0.38in}\noindent\textbf{Iteration $j$}: server proceeds with the following calculations.
\For{$a\in\VV_{\text{A}}(k+1)$} \For{$b\in\VV_{\text{B}}^a(k+1)$} \State {Server
  calculates $\Tvect{H}_a$, $\Tvect{H}_b$ and $\vect{r}^a$ using
  $\Hvect{x}^{\prpg a}(k+1)=\Hvect{x}^{\updt a}(k+1,j-1)$ and $\Hvect{x}^{\prpg b}(k+1)=\Hvect{x}^{\updt b}(k+1,j-1)$. Then,
  using these measurement matrices and $\Hvect{P}^{\prpg a}(k+1)=\Hvect{P}^{\updt a}(k+1,j-1)$,
  $\Hvect{P}^{\prpg b}(k+1)=\Hvect{P}^{\updt b}(k+1,j-1)$ and $\vect{\Pi}_{a,b}(k)=\vect{\Pi}_{a,b}(k+1,j-1)$, server
  calculates $\vect{S}_{a,b}$ from~\eqref{eq::Sab_DCL} and subsequently
  $\Bvect{r}^a(k+1,j)=(\vect{S}_{a,b}(k+1,j))^{-\frac{1}{2}}\vect{r}^a(k+1,j)$ and
  $\vect{\Gamma}_i(k+1,j)$ from~\eqref{eq::barD-i}-\eqref{eq::barD-b} for $i\in\VV$. Next, server
  updates the state and the covariance of all the robots in
  $i\in{\VV_{\text{A,B}}}(k+1)$ as follows
\begin{subequations}\label{eq::CCU_update_for_robots_in_V_A}
\begin{align}
          &   \Hvect{x}^{\updt i}(k\!+\!1,j)=\Hvect{x}^{\updt i}(k\!+\!1,j)\!+\!\vect{\Phi}^{i}(k\!+\!1) \,\vect{\Gamma}_i(k\!+\!1,j)\,\Bvect{r}^a\!(k\!+\!1,j),\\
          & \vect{P}^{\!\updt i}\!(k\!+\!1,j)\!=\!\vect{P}^{\!\updt
            i}(\!k\!+\!1,\!j)\!-\!\vect{\Phi}^{\!i}(k\!+\!1)\vect{\Gamma}_{i}(k\!+\!1,\!j)\!\vect{\Gamma}_i(k\!+\!1,\!j)\!^{\top}\!\!\vect{\Phi}^i\!(k\!+\!1)^{\!\top}\!\!.
 \end{align}
 \end{subequations}
 It also updates $\vect{\Pi}_{i,l}$ for
 $i\in\VV\backslash\{N\},~l\in\{i+1,\cdots,N\}$ as follows
 \begin{align*}
    \vect{\Pi}_{i,l}(k+1,j)=\vect{\Pi}_{i,l}(k+1&,j-1)-\vect{\Gamma}_i(k+1,j) \vect{\Gamma}_l(k+1,j)^\top,\quad\\
  &\text{if~}(i,l)\not\in\VV_{\text{missed}}(k+1)\times\VV_{\text{missed}}(k+1).
\end{align*}
 \State $j \leftarrow j+1$}
\EndFor
\EndFor
\State server sets  $ \vect{\Pi}_{i,l}(k+1)=\vect{\Pi}_{i,l}(k+1,n_s)$, where $n_s=\sum_{a\in\VV_{\text{A}}(k+1)}|\VV_\text{\text{B}}^a(k+1)|$.

\State server broadcasts the following update messages for robot $i\in\VV$
\begin{align}\label{eq:updt-mssg-squl}\textsl{update-message}^i=\,&\Big(\sum\nolimits_{j=1}^{n_s}\!(\vect{\Gamma}_i(k+1,j)\Bvect{r}^a(k+1,j)),\\
&~~\sum\nolimits_{j=1}^{n_s}\!(\vect{\Gamma}_i(k+1,j)\vect{\Gamma}_i(k+1,j)^\top)\Big).\nonumber\end{align}

\end{algorithmic}
}
\end{algorithm}
One can expect that the updating order must not dramatically change
the results (cf. ~\cite[page 104]{YB-PKW-XT:11} and references therein). Here, we assume that the server has a pre-specified \textsl{sequential-updating-order}
  guideline, which indicates the priority order for implementing the
  measurement update. To implement sequential updating procedure, the robots making
measurements inform the server and indicate to server what their landmark robots are., i.e.,  the server knows {$\VV_{\text{A}}(k+1)$} and
{$\VV_\text{\text{B}}^i(k+1)$}'s, and sorts both of these sets
according to it's sequential-updating-order guideline. The server collects all the landmark
messages~\eqref{eq::DCL-lmssg} of the robots in
{$\VV_{\text{A,B}}(k+1)$}. 
We use the compact representation~\eqref{eq::sequ-updt-DCL} of the
sequential updating procedure to develop a partially decentralized
implementation which requires only one update message broadcast from
the server, see Algorithm~\ref{alg::CCU_seqential}.
 Note that in this
implementation, the server should create a local copy of the state estimate and the error covariance equations of the robots in {${\VV_{\text{A,B}}}(k+1)$}
(see.~\eqref{eq::CCU_update_for_robots_in_V_A}), because these updates are needed to compute $\vect{S}_{a,b}$ and other intermediate variables. An alternative implementation is also possible where the
update message for every robot $i\in\VV_{\text{A},\text{B}}(k+1)$
is \begin{align*}
&  \textsl{update-message}^i=\Big((\vect{\Phi}^i)^{-1}(\Hvect{x}^{\updt
    i}(k+1,n_s)-\Hvect{x}^{\prpg
    i}(k+1)),\\
  &\quad\qquad-\!(\vect{\Phi}^i)^{-1}(\vect{P}^{\updt
    i}(k+1,n_s)\!-\!\vect{P}^{\prpg
    i}(k+1))(\vect{\Phi}^i)^{-T}\Big).  \end{align*} instead
of~\eqref{eq:updt-mssg-squl}. This is because the server already has
computed the update state estimates and the corresponding covariances
of robot $i\in{\VV_{\text{A,B}}}(k+1)$ as part of partial updating
procedure, i.e, $\Hvect{x}^{\updt i}(k+1)=\Hvect{x}^{\updt
  i}(k+1,n_s)$, and $\vect{P}^{\updt i}(k+1)=\vect{P}^{\updt
  i}(k+1,n_s)$.

\end{document}